\pgfplotsset{compat=1.17}
\newcommand{\R}{\mathbb{R}}
\newcommand{\E}{\mathbb{E}}
\newcommand{\p}{\mathbb{P}}
\newtheorem{definition}{Definition}
\newtheorem{lemma}{Lemma}
\newtheorem{assumption}{}
\newtheorem{theorem}{Theorem}
\newtheorem{corollary}{Corollary}
\newtheorem{remark}{Remark}
\DeclareMathOperator{\argmin}{argmin}
\renewcommand\section{\@startsection {section}{1}{\z@}%
    {18\p@ \@plus 6\p@ \@minus 3\p@}%
    {9\p@ \@plus 6\p@ \@minus 3\p@}%
    {\normalsize\bfseries\boldmath}}
\renewcommand\subsection{\@startsection{subsection}{2}{\z@}%
    {12\p@ \@plus 6\p@ \@minus 3\p@}%
    {3\p@ \@plus 6\p@ \@minus 3\p@}%
    {\normalfont\normalsize\itshape}}
\renewcommand\subsubsection{\@startsection{subsubsection}{3}{\z@}%
    {12\p@ \@plus 6\p@ \@minus 3\p@}%
    {\p@}%
    {\normalfont\normalsize\itshape}}
\title{Smoothed functional-based gradient algorithms \\for off-policy reinforcement learning:\\A non-asymptotic viewpoint}
\author{Nithia Vijayan\thanks{nithiav@cse.iitm.ac.in} }
\author{Prashanth L. A.\thanks{prashla@cse.iitm.ac.in}}
\affil{\normalsize \textit{Department of Computer Science and Engineering, Indian Institute of Technology Madras, India}}
\date{}
\begin{document}
    \maketitle
\begin{abstract}
    We propose two policy gradient algorithms for solving the problem of control in an off-policy reinforcement learning (RL) context. Both algorithms incorporate a smoothed functional (SF) based gradient estimation scheme. The first algorithm is a straightforward combination of importance sampling-based off-policy evaluation with SF-based gradient estimation. The second algorithm, inspired by the stochastic variance-reduced gradient (SVRG) algorithm, incorporates variance reduction in the update iteration. For both algorithms, we derive non-asymptotic bounds that establish convergence to an approximate stationary point. From these results, we infer that the first algorithm converges at a rate that is comparable to the well-known REINFORCE algorithm in an off-policy RL context, while the second algorithm exhibits an improved rate of convergence.
\end{abstract}
\section{Introduction}
\label{sec:intro}
In a reinforcement learning (RL) problem, an agent learns to achieve a goal through interactions with an environment. The interactions between the agent and the environment are represented as a Markov decision process (MDP). The agent interacts with the environment through actions, and as a response the environment changes its state and provides a reward. The goal of the agent is to maximize the cumulative reward over time by learning an optimal policy to choose actions.

We consider the problem of control in an off-policy RL setting, where the agent aims to learn an optimal policy using the data collected by executing an exploratory policy called behavior policy.
Off-policy RL is useful in practical scenarios where the system may not allow execution of any policy other than a fixed behavior policy. While the behavior policy may not be optimal, it can be exploratory, and aids in the search for the optimal policy.

Policy gradient algorithms \cite{williams,sutton1999policy,marbach2001simulation,kakade2001natural,shalabh2004spsaRL,papini2018,xu2020,zhangK2020,agarwal2020} are a popular approach for solving MDPs. In a few special cases such as linear systems with quadratic cost, policy gradient algorithms can be shown to be globally convergent \cite{bhandari2019,fazel2018,mohammadi2021}. In the general case, the usual convergence guarantees for a policy gradient algorithm are to a stationary point of the underlying value function (cf. \cite{papini2018,zhangK2020}). In \cite{mei2020,agarwal2020,bhandari2020}, the authors analyze policy gradient methods in the idealized setting where the gradient information is made directly available, while we consider a typical off-policy RL setting where the gradient of the objective has to estimated from a sample path of the behavior policy. Most of the previous works use the likelihood ratio method,  proposed in \cite{rubinstein1969some}, see \citep{Fu2006b,Fu2015} for an introduction. This approach for estimating the policy gradient was first used in a policy optimization context in the REINFORCE algorithm \cite{williams}. REINFORCE style gradient estimate methods are analyzed in \cite{zhangJ2020, liu2020}. While \cite{zhangJ2020} uses log barrier regularization, \cite{liu2020} analyzes a natural and variance-reduced counterparts of the policy gradient algorithm. The likelihood ratio method leads to unbiased estimates of the policy gradient.

An alternative approach for gradient estimation is the simultaneous perturbation method, see \cite{shalabh_book} for a textbook introduction. This method is based on finite differences, and results in a biased estimate of the policy gradient. A popular algorithm in this class is simultaneous perturbation stochastic approximation (SPSA), proposed in \cite{spall1992}. Using the classic finite difference type estimate of the policy gradient, i.e., a scheme that perturbs each co-ordinate separately, would require $2d$ function measurements, where $d$ is the dimension of the policy parameter. On the other hand, the SPSA scheme used random perturbations, e.g., a vector of independent Rademacher random variables (r.v.s), to simultaneously perturb all co-ordinates, and this scheme would work with two function measurements, irrespective of the dimension.
SPSA has been used in a policy gradient algorithm in \cite{shalabh2004spsaRL,shalabh2010}. Smoothed functional (SF) \cite{katkovnik1972,nesterov2017} is another simultaneous perturbation method, where one could employ a vector of independent standard Gaussian r.v.s as random perturbations.

In this paper, we propose two policy gradient algorithms for off-policy control. For the purpose of policy evaluation, both algorithms use the importance sampling ratios --- a standard scheme for unbiased off-policy evaluation. Unlike previous works on off-policy RL, our algorithms incorporate a SF-based gradient estimate scheme. We use the two function measurements variant of SF, which is equivalent to
evaluating two perturbed policies. In an on-policy RL setting, SF-based approach may be restrictive owing to the fact that
running two system trajectories corresponding to two perturbed policies may not feasible in some practical applications. On the other hand, using a SF-based policy gradient scheme does not run into practical difficulties in an off-policy RL context, since the system is simulated using a single behavior policy.

The first algorithm, henceforth referred to as OffP-SF, is a straightforward combination of importance sampling-based off-policy evaluation with SF-based gradient estimation. The second algorithm is inspired by the SVRG algorithm, which was proposed in \cite{johnson13} for optimizing finite `strongly convex' sum of smooth functions, and later adapted to a non-convex optimization setting (cf. \cite{reddi16,allen2016variance}). This algorithm, referred to as OffP-SF-SVRG, is the variance-reduced variant of the OffP-SF algorithm.
To the best of our knowledge, a variance-reduced policy gradient algorithm inspired by SVRG has not been proposed/analyzed in an off-policy RL context in the literature, while SVRG has been explored in the context of on-policy RL in \cite{papini2018,xu2020}. Recent work in \cite{mgh} explores variance reduction in an off-policy context inspired by a momentum-based method \cite{storm}.

In this paper, we focus on the non-asymptotic performance of the proposed algorithms. The results for policy gradient methods employing simultaneous perturbation-based gradient estimates are asymptotic in nature (cf. \cite{shalabh2004spsaRL,shalabh2010}). On the other hand, using ideas from zeroth-order optimization, policy gradient methods with REINFORCE style gradient estimates have been shown to converge to an $\epsilon$-stationary point (see Definition \ref{def:esolution} below) in the non-asymptotic regime. In this paper, we study policy gradient algorithms with the simultaneous perturbation approach, and derive non-asymptotic bounds for these algorithms --- see Table \ref{tab:summary} for a summary of our bounds in terms of iteration complexity, which is the number of policy gradient iterations required to find an $\epsilon$-stationary point. The primary conclusions from our non-asymptotic analysis are as follows: (i) After $N$ iterations of OffP-SF, the value function gradient at a suitably chosen iterate, say $\theta_R$, satisfies an order $O(\frac{1}{\sqrt{N}})$ bound on $\mathbb{E} ||\nabla J(\theta_R)||^2$; (ii) The corresponding bound for OffP-SF-SVRG is of the order $O(\frac{1}{N})$.

\begin{table}[h]
	\caption{Iteration complexity for our proposed algorithms, and the off-policy variant of REINFORCE. Here iteration complexity denotes the number of iterations required to find an $\epsilon$-stationary point (see Definition \ref{def:esolution}).}
	\label{tab:summary}
	\centering
	\begin{tabular}{c| c}
		\toprule
		Algorithm & Iteration complexity  \\
		\midrule
		OffP-REINFORCE\footnotemark &  $O(1/\epsilon^2)$\\\midrule
		OffP-SF  & $O(1/\epsilon^2)$ \\\midrule
		OffP-SF-SVRG & $O(1/\epsilon)$ \\
		\bottomrule
	\end{tabular}
\end{table}
 \footnotetext{This variant uses importance sampling ratios for off-policy evaluation, and the likelihood ratio method for gradient estimation.}

 Our bounds have a few advantages over those in the literature for zeroth-order optimization and on-policy RL using policy gradient algorithms. To elaborate, the closest result to the non-asymptotic bound for offP-SF is Corollary 3.3 of \cite{ghadimi2013}. For setting the step-size/perturbation constant in this result, one requires knowledge of quantities that are typically unknown in an RL setting. On the other hand, our non-asymptotic bound features a universal step-size/perturbation constant. In arriving at this result, we depart from the argument employed in the proof of Corollary 3.3 of \cite{ghadimi2013}.
 Our bound for OffP-SF in Corollary \ref{cr:non_asym} is comparable to the one provided in Corollary 4.4 in \cite{zhangK2020}, as both results are on the size of the gradient of the objective $J$ at a suitably chosen policy iterate. We employ smoothed functional based gradient estimation, while the authors in \cite{zhangK2020} use a REINFORCE style gradient estimate. Their result is for a diminishing step-size, while we employ a constant step-size.
Next,  the gradient estimates underlying the SVRG-based on-policy RL algorithms in \cite{papini2018,xu2020} use the likelihood ratio method, which result in unbiased estimates. On the other hand, our OffP-SF-SVRG algorithm employs smoothed functional-based gradient estimates, which are biased in nature. Through a careful handling of the bias terms in several steps of the proof, we are able to obtain an order $O(\frac{1}{N})$ bound for the OffP-SF-SVRG algorithm. The corresponding results for on-policy SVRG algorithm in \cite{papini2018,xu2020} features additional terms --- see the discussion below Theorem \ref{tm:non_asym_svrg} for more details.

In \cite{degris2012, zhangS2019, zhangS2020}, the authors propose actor-critic algorithms in an off-policy RL setting. In comparison, we do not incorporate function approximation in our proposed algorithms, and hence, a direct comparison is not feasible. Nevertheless, we mention that the algorithms in these references involve at least two timescales, and to the best of our knowledge, there are no non-asymptotic bounds for two timescale stochastic approximation, with a non-linear update iteration (as in the case of the actor update in the aforementioned references). In contrast, our algorithms operate on a single timescale, facilitating a non-asymptotic analysis. In \citep{liu2020}, the authors establish global convergence results for natural and variance-reduced counterparts of the policy gradient algorithm, with REINFORCE style gradient estimates. These results are under an assumption that the underlying policy parameterization is sufficiently rich. In contrast, we study local convergence properties of the vanilla and variance-reduced variants of the policy gradient algorithm, with smoothed functional-based gradient estimates. Finally, our non-asymptotic bound for OffP-SF-SVRG shows improved dependence on the number of iterations, as compared to the bound in \cite{mgh}, where the authors analyze a momentum-based variance reduced policy gradient scheme in an off-policy context.


This paper is an extended version of an earlier work (see \cite{nv2021}). Although the order of the convergence bounds remains the same, this version corrects errors from the earlier work by revising the proofs.

The rest of the paper is organized as follows: Section \ref{sec:pblm} describes the off-policy control problem. Section \ref{sec:offspsa} introduces our algorithms. Section \ref{sec:main} presents the non-asymptotic bounds for our algorithms. Section \ref{sec:conv} provides detailed proofs of convergence. Finally, Section \ref{sec:conclusions} provides the concluding remarks.

\section{Problem formulation}
\label{sec:pblm}
We consider an MDP with a state space $\mathscr{S}$, and an action space $\mathscr{A}$, both assumed to be finite. We operate in an episodic setting with a random episode length $T\in\mathbb{N}$. At time $t \in \{0, \hdots, T-1\}$, the MDP is in state $S_t$, and transitions to state $S_{t+1}$ by an action $A_t$ chosen by a behavior policy $b$, and receives a reward $R_{t+1}\in\mathbb{R}$. We assume the rewards are bounded, the start state $S_0$ is fixed. We also assume a special state $0$ as a termination state.

Let $\Theta$ be a compact and convex subset of $\mathbb{R}^d$. We consider parameterized stochastic target policies $\{\pi_\theta, \;\theta\in\Theta\}$, where $\pi_{\theta}(a\!\mid\! s)=\mathbb{P}\{A_t=a \!\mid\! S_t=s,\theta_t=\theta\}$. As an example, one may use an exponential softmax distribution, i.e.,
\begin{align*}
	\pi_{\theta}(a\!\mid\! s)= \frac{exp\left(h(s,a,\theta)\right)}{\sum_{b\in \mathscr{A}}exp\left(h(s,b,\theta)\right)},
\end{align*}
where $h:\mathscr{S}\times\mathscr{A}\times\Theta \rightarrow \mathbb{R}$ is a parameterized user defined function (cf. Chapter 13 of \cite{sutton_book}).
We assume that each policy in the parameterized class $\Theta$, and the behavior policy are proper (see \ref{as:proper}).

We assume that the MDP trajectory terminates under $\pi_\theta$ w.p. $1$, $\forall \theta \in \Theta$. The goal here is to find $\theta^*$ such that
\begin{align}
	\label{eq:max_theta}
	\theta^*\in\textrm{argmax}_{\theta \in \Theta} J(\theta),
\end{align}
where $J(\theta)$ is the value function, and is defined as
\begin{align}
	\label{eq:j_theta}
	J(\theta) = \mathbb{E}_{\pi_\theta}\left[\sum_{t=0}^{T-1}\gamma^t R_{t+1}\right],
\end{align}
where $\gamma \in (0,1]$ is the discount factor.

\section{Off-policy gradient algorithms}
\label{sec:offspsa}
A gradient-based algorithm for solving \eqref{eq:max_theta} would involve the following update iteration:
\begin{align}
	\label{eq:theta_update_1}
	\theta_{k+1} = \Pi_{\Theta}(\theta_k + \alpha_k \nabla J(\theta_k)),
\end{align}
where $\theta_0$ is set arbitrarily, and $\nabla$ is with respect to $\theta$.
In the above, the step-size $\alpha_k \in (0,1]$, and $\Pi_\Theta:\R^d \to \Theta$ is an operator that projects on to $\Theta$. The projection is required to ensure stability of the iterates in \eqref{eq:theta_update_1}, and is common in the analysis of policy gradient algorithms (cf. \cite{shalabh2004spsaRL}).
As an example, one may define $\Theta=\prod_{i=1}^d[\theta_{\min}^i, \theta_{\max}^i]$. Then, the  projection operator $\Pi_\Theta(\theta) = [\Pi_{\Theta}^1(\theta^1), \cdots, \Pi_{\Theta}^d(\theta^d)]$, where $\Pi_{\Theta}^i(\theta^i) = \min(\max(\theta_{\min}^i,\theta^i),\theta_{\max}^i), i\in \{1 \hdots d\}$.  It is easy to see that such a projection operation is computationally inexpensive.

We describe two algorithms for solving \eqref{eq:max_theta} below.
\subsection{OffP-SF}
In an off-policy setting, the distribution of data (states/actions seen along a sample path) follows the behavior policy. The off-policy evaluation problem is to learn the value of a target policy, which is different from the behavior one.
A standard off-policy evaluation scheme is per-decision importance sampling (see Section 5.9 of \cite{sutton_book}).  Here, one scales the objective by the likelihood ratio of the target policy, say $\pi_\theta$ to the behavior policy, say $b$ at the current state.
More precisely,  we generate $m$ episodes using $b$  and estimate  $J(\theta)$ as follows:
\begin{align}
    \label{eq:Jm}
    \hat{J}_{m}(\theta) \!=\! \frac{1}{m}\!\sum_{j=1}^{m} \!\sum_{t=0}^{T^j-1}\gamma^t R_{t+1}^j \!\left(\prod_{i=0}^{t}\frac{\pi_{\theta}(A_i^j\!\mid\!S_i^j)}{b(A_i^j\!\mid\!S_i^j)}\! \right).
\end{align}
In the above, $T^j$ is the length of the $j^{th}$ episode, and $R_{t+1}^j$ is the reward at time $t\!+\!1$. Also $S_i^j$ is the state, and $A_i^j$ is the action taken at time $i$ of the $j^{th}$ episode.

For estimating the gradient $\nabla J(\cdot)$, we employ the estimation scheme from \cite{katkovnik1972,nesterov2017}.
The idea here is to form a smoothed functional, denoted by $J_\mu$, of the value $J(\cdot)$, and use $\nabla J_\mu$ as a proxy for $\nabla J$. To be more precise, the smoothed functional  $J_\mu(\theta)$ is defined by
\begin{align}
	\label{eq:J_mu}
	J_\mu(\theta) = \mathbb{E}_{u \in \mathbb{B}^d}\left[J(\theta+\mu u)\right],
\end{align}
where $\mu \in (0,1]$ is the smoothing parameter, and $u$ is sampled uniformly at random from a unit ball $\mathbb{B}^d=\{x\in\mathbb{R}^d \mid \lVert x \rVert \leq 1\}$. Here $\lVert \cdot \rVert$ denotes the $d$-dimensional Euclidean norm.

We estimate the gradient using two randomly perturbed policies (cf. \cite{liu,shamir}). We favor the `balanced' estimate based on two random perturbations instead of a one-sided estimate because the bound on the second moment of the balanced estimate exhibits a linear dependence on the underlying dimension $d$ (see Lemma \ref{lm:var}), while the corresponding dependence in an one-sided estimate is quadratic in $d$ (see Proposition 7.6 of \cite{gao2018}).

We perturb the policy parameter $\theta$ by adding and subtracting a scalar multiple of a random unit vector $v$. The perturbed policy parameters lie in the set $\Theta'$ defined as follows:
\begin{align}
 \Theta'= \{\theta':\lVert\theta' - \theta \rVert \leq 1, \theta \in \Theta\}.\label{eq:thetaprime}
 \end{align}
In order to control the variance, we average the gradient estimate over $n$ random unit vectors.
The estimate $\widehat{\nabla}_{n,\mu} \hat{J}_{m}(\theta)$ of the gradient $\nabla J(.)$ is formed as follows:
\begin{align}
	\label{eq:nabla_Jm}
	\widehat{\nabla}_{n,\mu} \hat{J}_{m}(\theta)\!=\!\frac{d}{n}\sum_{i=1}^{n}\frac{\hat{J}_{m}(\theta+\mu v_i) \!-\! \hat{J}_{m}(\theta - \mu v_i)}{2\mu}v_i,
\end{align}
where $\forall i, v_i $ is sampled uniformly at random from a unit sphere $\mathbb{S}^{d-1}=\{x\in\R^d \mid \lVert x \rVert = 1\}$.

We collect $m$ sample paths using the behavior policy $b$, and use this data to estimate the value associated with the $2n$ perturbed policies in \eqref{eq:nabla_Jm}.


We solve \eqref{eq:max_theta} using the following update iteration:
\begin{align}
	\label{eq:approx_theta_update}
	\theta_{k+1} = \Pi_\Theta(\theta_k + \alpha_k \widehat{\nabla}_{n_k,\mu_k} \hat{J}_{m}(\theta_k)).
\end{align}

Algorithm \ref{alg:tab} presents the pseudocode of OffP-SF algorithm, with the following ingredients: (i) a gradient ascent update according to \eqref{eq:approx_theta_update}; (ii) a SF-based gradient estimation scheme; and (iii)  an importance sampling-based policy evaluation scheme.
\begin{algorithm}
    \caption{OffP-SF}
    \label{alg:tab}
    \begin{algorithmic}[1]
        \STATE \textbf{Input}: Parameterized form of target policy $\pi$ and behavior policy $b$, iteration limit $N$, step-sizes $\{\alpha_k\}$, perturbation constants $\{\mu_k\}$, batch size $m$, $\{n_k\}$, and probability mass function (pmf) $P_R(\cdot)$ supported on $\{1,\cdots,N\}$;
        \STATE \textbf{Initialize}: Target policy parameter $\theta_{0} \in \R^d$, and the discount factor $\gamma \in (0,1)$;
        \FOR {$k=0,\hdots, N-1$ }
        \FOR {$j=1, \hdots, m $}
        \STATE Get $(S_0^j, A_0^j, R_1^j, \!\cdots\!, S_{T_j-1}^j, A_{T_j-1}^j, R_{T_j}^j) \sim b$;
        \ENDFOR
        \FOR {$i=1,\hdots, n_k$ }
        \STATE Get $[v_i^1, \hdots, v_i^d] \in \mathbb{S}^{d-1}$;
        \STATE Use \eqref{eq:Jm} to estimate $\hat{J}_{m}(\theta_k \pm \mu_k v_i)$;
        \ENDFOR
        \STATE Use \eqref{eq:nabla_Jm} to estimate $\widehat{\nabla}_{n_k,\mu_k} \hat{J}_{m}(\theta_k)$;
        \STATE Use \eqref{eq:approx_theta_update} to calculate $\theta_{k+1}$;
        \ENDFOR
        \STATE \textbf{Output}: Policy $\theta_R$ where $R \sim P_R$.
    \end{algorithmic}
\end{algorithm}
\subsection{OffP-SF-SVRG}
Our second algorithm is a modification of the Algorithm \ref{alg:tab} that incorporates the concept of variance reduction seen in SVRG algorithms \cite{johnson13,reddi16}.
The principle of variance reduction underlying the SVRG algorithm has been explored in the context of on-policy RL in \cite{papini2018,xu2020}. The gradient estimates underlying the algorithms in the aforementioned references use the likelihood ratio method, which results in unbiased estimates. On the other hand, we employ SF-based gradient estimates, which are biased in nature.

We use nested update iterations to solve \eqref{eq:max_theta}. Our algorithm features an outer loop iterating over $s\in\{0,\cdots,S-1\}$, and an inner loop iterating over $k\in\{0,\cdots,l-1\}$. The policy parameters are of the form $\theta_k^s$, where $\theta_0^0$ is set arbitrarily.

In the outer loop, we sample $m$ episodes using the behavior policy $b$. We use a reference point $\tilde{\theta}^s\in\Theta$, which is initialized to $\theta_0^0$, and is updated as $\tilde{\theta}^{s+1}= \theta^s_m$. We calculate $\hat{J}_{m}(\tilde{\theta}^s)$ and $\widehat{\nabla}_{n,\mu} \hat{J}_{m}(\tilde{\theta}^s)$ using \eqref{eq:Jm} and \eqref{eq:nabla_Jm} respectively. Also $\forall j \in \{1,\cdots,m\}$, we calculate $\hat{J}^j(\tilde{\theta}^s)$ and $\widehat{\nabla}_{n,\mu} \hat{J}^j(\tilde{\theta}^s)$, where
\begin{align}
	\label{eq:Jhat_j}
	\hat{J}^j(\theta) = \sum_{t=0}^{T^j-1}\gamma^t R_{t+1}^j \left(\prod_{i=0}^{t}\frac{\pi_{\theta}(A_i^j|S_i^j)}{b(A_i^j|S_i^j)} \right),
\end{align}
and
\begin{align}
	\label{eq:nabla_Jhat_j}
	\widehat{\nabla}_{n,\mu} \hat{J}^j(\theta)=\frac{d}{n}\sum_{i=1}^{n}\frac{\hat{J}^j(\theta+\mu v_i) - \hat{J}^j(\theta - \mu v_i)}{2\mu}v_i.
\end{align}
In the above, $\forall i, v_i $ is sampled uniformly at random from a unit sphere $\mathbb{S}^{d-1}$.

In the inner loop, we pick a sample $j$ uniformly at random from $\{1,\cdots,m\}$ and calculate $\hat{J}^j(\theta_k^s)$ and $\widehat{\nabla}_{n,\mu} \hat{J}^j(\theta_k^s)$ using \eqref{eq:Jhat_j} and \eqref{eq:nabla_Jhat_j} respectively.

We update the policy parameters as follows:
\begin{align}
	\label{eq:approx_theta_update_svrg}
	\theta_{k+1}^s = \Pi_\Theta(\theta_k^s + \alpha g_k^s),
\end{align}
where
\begin{align}
    \label{eq:g}
    g_k^s=\widehat{\nabla}_{n,\mu} \hat{J}^j(\theta_k^s) - \widehat{\nabla}_{n,\mu} \hat{J}^j(\tilde{\theta}^s) + \widehat{\nabla}_{n,\mu}\hat{J}_{m}(\tilde{\theta}^s).
\end{align}

Algorithm \ref{alg:svrg} presents the pseudocode of OffP-SF-SVRG algorithm.
\begin{algorithm}
	\caption{OffP-SF-SVRG}
	\label{alg:svrg}
	\begin{algorithmic}[1]
		\STATE \textbf{Input}: Parameterized form of target policy $\pi$ and behavior policy $b$, iteration limit $S$, step-size $\alpha$, perturbation constant $\mu$, batch sizes $m, n$, and a joint pmf $P_{QR}(\cdot,\cdot)$ supported on $\!\{0,\cdots,S-1\}\!$ and $\!\{0,\cdots, l-1\}\!$ respectively;
		\STATE \textbf{Initialize}: Target policy parameter $\tilde{\theta}^{0}=\theta_0^0\in \R^d$, and the discount factor $\gamma \in (0,1)$;
		\FOR {$s=0,\hdots, S-1$ }
		\FOR {$j=1, \hdots, m $}
		\STATE Get $(S_0^j, A_0^j, R_1^j,\!\cdots\!, S_{T_j-1}^j, A_{T_j-1}^j, R_{T_j}^j) \sim b $;
		\ENDFOR
		\FOR {$i=1,\hdots, n$ }
		\STATE Get $[v_i^1, \hdots, v_i^d] \in \mathbb{S}^{d-1}$;
		\FOR {$j=1, \hdots, m $}
		\STATE Use \eqref{eq:Jhat_j} to estimate $\hat{J}^j(\tilde{\theta}^s \pm \mu v_i)$;
		\ENDFOR
		\STATE Use \eqref{eq:Jm} to estimate $\hat{J}_{m}(\tilde{\theta}^s \pm \mu v_i)$;
		\ENDFOR
		\FOR {$j=1,\hdots, m$ }
		\STATE Use \eqref{eq:nabla_Jhat_j} to estimate $\widehat{\nabla}_{n,\mu} \hat{J}^j(\tilde{\theta}^s)$;
		\ENDFOR
		\STATE Use \eqref{eq:nabla_Jm} to estimate $\widehat{\nabla}_{n,\mu} \hat{J}_{m}(\tilde{\theta}^s)$;
		\FOR {$k=0,\hdots, l-1$ }
		\STATE Get $j \in [1,m]$ uniformly and at random.
		\FOR {$i=1,\hdots, n$ }
		\STATE Use \eqref{eq:Jhat_j} to estimate $\hat{J}^j(\theta_k^s \pm \mu v_i)$;
		\ENDFOR
		\STATE Use \eqref{eq:nabla_Jhat_j} to estimate $\widehat{\nabla}_{n,\mu} \hat{J}^j(\theta_k^s)$;
		\STATE Use \eqref{eq:g} to calculate $g_k^s$;
		\STATE Use \eqref{eq:approx_theta_update_svrg} to calculate $\theta_{k+1}^s$;
		\ENDFOR
		\STATE $\tilde{\theta}^{s+1}=\theta_0^{s+1}=\theta_{l}^s$;
		\ENDFOR
		\STATE \textbf{Output}: Policy $\theta_{R}^{Q}$ where $Q,R \sim P_{QR}$.
	\end{algorithmic}
\end{algorithm}
\subsection{OffP-REINFORCE}
In REINFORCE \citep{Williams1992}, which is a well-known policy gradient algorithm, the gradient estimation scheme is based on the LR method. In principle, one could employ IS-based policy evaluation together with a REINFORCE style gradient estimate. We compare our algorithms to an off-policy variant of the REINFORCE algorithm, henceforth referred to as OffP-REINFORCE.
The OffP-REINFORCE algorithm employs the following update iteration:
\begin{align}
    \label{eq:theta_update_pg}
    \theta_{k+1} = \Pi_\Theta\left(\theta_k + \alpha \widehat{\nabla} J(\theta)\right),
\end{align}
where $\widehat{\nabla} J(\theta)$ is defined as follows:
\begin{align}
    \label{eq:hat_nabla_j_theta}
    \widehat{\nabla} J_m(\theta) &= \frac{1}{m}\sum_{j=1}^{m}\left[\sum_{t=0}^{T^j-1}\nabla\log\pi_\theta(A_t^j\mid S_t^j)
    \left(\prod_{i=0}^{t}\frac{\pi_\theta(A_i^j\mid S_i^j)}{b(A_i^j\mid S_i^j)}\right)\left(\sum_{i=t}^{T^j-1}\gamma^i R_{i+1}^j\right)\right].
\end{align}

Algorithm \ref{alg:reinforce} presents the pseudocode of OffP-REINFORCE algorithm.
\begin{algorithm}
    \caption{OffP-REINFORCE}
    \label{alg:reinforce}
    \begin{algorithmic}[1]
        \STATE \textbf{Input}: Parameterized form of target policy $\pi$ and behavior policy $b$, iteration limit $N$, step-sizes $\{\alpha_k\}$, perturbation constants $\{\mu_k\}$, batch size $m$, $\{n_k\}$, and probability mass function $P_R(\cdot)$ supported on $\{0,\cdots,N-1\}$;
        \STATE \textbf{Initialize}: Target policy parameter $\theta_{0} \in \Theta$, and the discount factor $\gamma \in (0,1)$;
        \FOR {$k=0,\hdots, N-1$ }
        \FOR {$j=1, \hdots, m $}
        \STATE Get $(S_0^j, A_0^j, R_1^j, \!\cdots\!, S_{T_j-1}^j, A_{T_j-1}^j, R_{T_j}^j) \sim b$;
        \ENDFOR
        \STATE Use \eqref{eq:hat_nabla_j_theta} to estimate $\widehat{\nabla} J_m(\theta_k)$;
        \STATE Use \eqref{eq:theta_update_pg} to calculate $\theta_{k+1}$;
        \ENDFOR
        \STATE \textbf{Output}: Policy $\theta_R$ where $R \sim P_R$.
    \end{algorithmic}
\end{algorithm}
\section{Main results}
\label{sec:main}
We make the following assumptions for the sake of analysis:
    \begin{assumption}
        \label{as:pol_cont}
        For any $a \in \mathscr{A}$ and $s \in \mathscr{S}$, $\log\pi_\theta(a \!\mid\! s)$ exists, and is twice continuously differentiable w.r.t. $\theta \in \Theta'$, where $\Theta'$ is defined in \eqref{eq:thetaprime}.
    \end{assumption}
    \begin{assumption}
        \label{as:b_pol}
        For every $\theta \in \Theta'$, the target policy $\pi_\theta$ is absolutely continuous with respect to the behavior policy $b$. i.e.,
        \begin{align*}
            \forall \theta \in \Theta',\; b(a \!\mid\! s)\!=\!0 \Rightarrow \pi_\theta(a \!\mid\! s)\!=\!0,\;\forall a \in \mathscr{A}, \forall s \in \mathscr{S}.
        \end{align*}
    \end{assumption}
\begin{assumption}
    \label{as:proper}
    The behavior policy $b$, and the class of target policies $\{\pi_\theta, \theta \in \Theta'\}$ are proper,
    i.e.,  there exists a positive constant $M$ s.t.
    \begin{align*}
        &\forall \theta \in \Theta',\; \max_{s\in\mathscr{S}}\;\mathbb{P}\left(S_M \neq 0 \mid S_0=s,\pi_\theta \right)<1, \textrm{ and}\\
        &\max_{s\in\mathscr{S}}\;\mathbb{P}\left(S_M \neq 0 \mid S_0=s, b \right)<1.
    \end{align*}
\end{assumption}
An assumption like \ref{as:pol_cont} is common to the analysis of policy gradient algorithms (cf. \cite{papini2018,xu2020}), while \ref{as:b_pol} is a standard requirements for off-policy evaluation. Further, \ref{as:proper} is a common requirement in the analysis of episodic MDPs, see Chapter 2 of \cite{ndp_book}. From \ref{as:pol_cont} and \ref{as:b_pol}, we have $\pi_{\theta}(a|s)>0$ and $b(a|s) >0$, $\forall \theta \in \R^d, \forall a \in \mathscr{A}, \textrm{ and } \forall s \in \mathscr{S}$.
In other words, we consider policies that place a positive mass on every action in any state.
The objective $J$ is not necessarily convex in a  typical RL setting, and hence, several previous works (cf. \cite{zhangK2020,papini2018,xu2020,shen2019hessian}) adopt convergence to an approximate stationary point,  which is defined below.
\begin{definition} \textbf{($\epsilon$-stationary point)}
	\label{def:esolution}
	Let $ \theta_R$ be the output of an algorithm. Then, $ \theta_R $ is called an $ \epsilon $-stationary point of problem \eqref{eq:max_theta}, if \ $ \mathbb{E} \left\| \nabla J \left( \theta _ { R } \right) \right\| ^ { 2 } \le \epsilon $.
\end{definition}
The non-asymptotic bounds for Algorithms \ref{alg:tab}--\ref{alg:svrg} that we present below establish convergence to an $\epsilon$-stationary point.

For the non-asymptotic analysis, we rewrite the update rule in \eqref{eq:approx_theta_update} as follows:
\begin{align}
	&\theta_{k+1} = \theta_k + \alpha_k \mathcal{P}_\Theta(\theta_k,\widehat{\nabla}_{n_k,\mu_k} \hat{J}_{m_k}(\theta_k), \alpha_k),\label{eq:gd_nonasym}
	\shortintertext{where}
	&\mathcal{P}_\Theta(\theta, f(\theta), \alpha) = \frac{1}{\alpha}\left[\Pi_{\Theta}(\theta+\alpha f(\theta))-\theta\right].\label{eq:proxy}
\end{align}
\begin{theorem}[\textbf{\textit{OffP-SF}}]
	\label{tm:non_asym}
	Assume \ref{as:pol_cont}--\ref{as:proper}. Let $P_R(k)=\mathbb{P}(R=k)=\frac{\alpha_k }{\sum_{k=0}^{N-1}\alpha_k}$, $\forall N \in \mathbb{N}$, and $J^*=\max_{\theta\in\Theta} J(\theta)$. Then,
	\begin{align}
		&\mathbb{E}\left[\left\lVert \mathcal{P}_\Theta(\theta_R, \nabla J (\theta_R), \alpha_R) \right\rVert^2\right]\nonumber\\
        &\leq \frac{\left(J^* - J(\theta_{0}) \right)+\frac{dL^2}{2}\sum_{k=0}^{N-1} \alpha_k \mu_k
            + \sqrt{2\pi}e^2dL^2 \sum_{k=0}^{N-1} \frac{\alpha_k}{\sqrt{n_k}} +\frac{d^2L^3}{2}\sum_{k=0}^{N-1}\alpha_k^2}{\sum\limits_{k=0}^{N-1}\alpha_k},\label{eq:nonasym-gen-bd}
	\end{align}
	where $L$ is the Lipschitz constant of $J$ as well as $\nabla J$ (see Lemma \ref{lm:J_lip} in Section \ref{sec:conv} below).
\end{theorem}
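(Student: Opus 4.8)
The plan is to adapt the standard analysis of projected stochastic gradient methods for smooth nonconvex objectives (cf.\ \cite{ghadimi2016}) to the biased smoothed-functional estimator used here. Throughout, abbreviate the realized gradient mapping driving \eqref{eq:gd_nonasym} by $P_k := \mathcal{P}_\Theta(\theta_k,\widehat{\nabla}_{n_k,\mu_k}\hat{J}_{m_k}(\theta_k),\alpha_k)$ and the idealized one appearing in \eqref{eq:nonasym-gen-bd} by $\bar{P}_k := \mathcal{P}_\Theta(\theta_k,\nabla J(\theta_k),\alpha_k)$, so that the update reads $\theta_{k+1}=\theta_k+\alpha_k P_k$. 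Recall the error decomposition $\widehat{\nabla}_{n_k,\mu_k}\hat{J}_{m_k}(\theta_k)-\nabla J(\theta_k)=\xi_k+\beta_k$ from \eqref{eq:xi_k}--\eqref{eq:beta_k}, where $\xi_k$ is zero-mean noise and $\beta_k$ is the smoothing bias.

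First I would establish two properties of the map $\mathcal{P}_\Theta$ in \eqref{eq:proxy}, both following from the non-expansiveness of the Euclidean projection $\Pi_\Theta$: (i) the inner-product inequality $\langle g,\mathcal{P}_\Theta(\theta,g,\alpha)\rangle \ge \lVert\mathcal{P}_\Theta(\theta,g,\alpha)\rVert^2$, obtained from the variational characterization of the projection; and (ii) non-expansiveness in the gradient slot, $\lVert\mathcal{P}_\Theta(\theta,g_1,\alpha)-\mathcal{P}_\Theta(\theta,g_2,\alpha)\rVert \le \lVert g_1-g_2\rVert$. Putting $g_2=0$ in (ii) and using $\lVert\nabla J\rVert\le L$ from Lemma \ref{lm:J_lip} gives $\lVert\bar{P}_k\rVert\le L$, while (ii) applied to the two gradient arguments at hand gives the crucial estimate $\lVert P_k-\bar{P}_k\rVert \le \lVert\xi_k+\beta_k\rVert$.

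Next, since $\nabla J$ is $L$-Lipschitz (Lemma \ref{lm:J_lip}), the smoothness (ascent) inequality applied along $\theta_{k+1}-\theta_k=\alpha_k P_k$ gives
\[
J(\theta_{k+1}) \ge J(\theta_k) + \alpha_k\langle\nabla J(\theta_k),P_k\rangle - \frac{L\alpha_k^2}{2}\lVert P_k\rVert^2 .
\]
Splitting $\langle\nabla J(\theta_k),P_k\rangle=\langle\nabla J(\theta_k),\bar{P}_k\rangle+\langle\nabla J(\theta_k),P_k-\bar{P}_k\rangle$, bounding the first term below by $\lVert\bar{P}_k\rVert^2$ via property (i), and the second by $-L\lVert\xi_k+\beta_k\rVert$ via Cauchy--Schwarz together with the two properties above, I reach the per-step inequality
\[
J(\theta_{k+1})-J(\theta_k) \ge \alpha_k\lVert\bar{P}_k\rVert^2 - \alpha_k L\lVert\xi_k+\beta_k\rVert - \frac{L\alpha_k^2}{2}\lVert P_k\rVert^2 .
\]
Summing over $k=0,\dots,N-1$ telescopes the left side to $J(\theta_N)-J(\theta_0)\le J^*-J(\theta_0)$. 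Taking expectations, I would bound $\mathbb{E}\lVert\xi_k+\beta_k\rVert\le\lVert\beta_k\rVert+\sqrt{\mathbb{E}\lVert\xi_k\rVert^2}$ by Jensen's inequality and insert the bias estimate $\lVert\beta_k\rVert=O(\mu_k)$ from Lemma \ref{lm:k_bias} and the variance estimate $\mathbb{E}[\lVert\xi_k\rVert^2\mid\theta_k]\le dL^2/(c_0 n_k)$ from Lemma \ref{lm:xi}; these yield the $\frac{dL^2}{2}\sum_k\alpha_k\mu_k$ and $\frac{2\sqrt{d}L^2}{\sqrt{c_0}}\sum_k\alpha_k/\sqrt{n_k}$ contributions. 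The leftover quadratic term is controlled through $\lVert P_k\rVert\le\lVert\widehat{\nabla}_{n_k,\mu_k}\hat{J}_{m_k}(\theta_k)\rVert$ and the second-moment estimate of Lemma \ref{lm:xi}, producing the $\frac{2dL^3}{c_0}\sum_k\alpha_k^2/n_k$ term. Finally, rearranging for $\sum_k\alpha_k\mathbb{E}\lVert\bar{P}_k\rVert^2$, dividing by $\sum_k\alpha_k$, and noting $\mathbb{P}(R=k)=\alpha_k/\sum_j\alpha_j$ identifies $(\sum_k\alpha_k\mathbb{E}\lVert\bar{P}_k\rVert^2)/\sum_k\alpha_k$ with $\mathbb{E}\lVert\bar{P}_R\rVert^2$, the left side of \eqref{eq:nonasym-gen-bd}.

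The main obstacle is the transfer from the realized mapping $P_k$ to the idealized $\bar{P}_k$. Because $P_k$ depends on the noise $\xi_k$ through the projection, I cannot use $\mathbb{E}[\xi_k\mid\theta_k]=0$ to cancel the first-order noise contribution; it must instead be bounded in norm, at the cost of $L\,\mathbb{E}\lVert\xi_k\rVert=O(1/\sqrt{n_k})$ rather than $O(1/n_k)$. This is exactly what pins the rate at $O(1/\sqrt{N})$ and is the step requiring the most care. A secondary point is that, since the step-size conditions \ref{as:time} are not assumed here, the quadratic term $\frac{L\alpha_k^2}{2}\lVert P_k\rVert^2$ must be absorbed through the variance of the estimator rather than by requiring $\alpha_k$ to be small relative to $1/L$.
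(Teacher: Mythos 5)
Your proposal is correct and follows essentially the same route as the paper's proof: the smoothness inequality along $\theta_{k+1}-\theta_k=\alpha_k P_k$, the three projection-map properties (the paper's Lemma \ref{lm:proxy}), the split of $\widehat{\nabla}_{n_k,\mu_k}\hat{J}_{m_k}(\theta_k)-\nabla J(\theta_k)$ into bias $\beta_k$ plus zero-mean noise $\xi_k$ (identical to the paper's split through $\nabla J_{\mu_k}$, since $\mathbb{E}[\widehat{\nabla}_{n_k,\mu_k}\hat{J}_{m_k}(\theta_k)\,|\,\theta_k]=\nabla J_{\mu_k}(\theta_k)$ by Lemma \ref{lm:del_J_mu_2}), bounding the noise in norm at cost $O(1/\sqrt{n_k})$ exactly as the paper does, telescoping against $J^*-J(\theta_0)$, and converting the weighted average via the randomized index $R$. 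The only bookkeeping nit is that the uncentered second moment controlling the quadratic term should be cited from Lemma \ref{lm:var} rather than Lemma \ref{lm:xi} (they share the constant $\frac{4dL^2}{c_0 n}$), and the bias estimate $\lVert\beta_k\rVert\le\frac{\mu_k dL}{2}$ is Lemma \ref{lm:bias} combined with Lemma \ref{lm:del_J_mu_2}, as restated inside the proof of Lemma \ref{lm:k_bias}.
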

\begin{proof}
	See Section \ref{sec:conv}.
\end{proof}
The result above holds for any choice of step-sizes $\{\alpha_k\}$, perturbation constants $\{\mu_k\}$, and batch sizes $m$, $\{n_k\}$. We specialize the bound in \eqref{eq:nonasym-gen-bd} for a particular choice of the aforementioned parameters in the corollary below.
\begin{corollary}[\textbf{\textit{OffP-SF}}]
	\label{cr:non_asym}
    Set $\forall k$, $\alpha_k=\frac{1}{\sqrt{N}}$, $\mu_k=\frac{1}{\sqrt{N}}$, $n_k=N$, and $0<m < \infty$. Then, under conditions of Theorem \ref{tm:non_asym}, we have
        \begin{align*}
            &\mathbb{E}\left[\left\lVert \mathcal{P}_\Theta(\theta_R, \nabla J (\theta_R), \alpha_R) \right\rVert^2\right]
            \leq\frac{J^* - J(\theta_{0})}{\sqrt{N}}+\frac{dL^2+d^2L^3}{2\sqrt{N}} + \frac{2\sqrt{2\pi}e^2dL^2 }{\sqrt{N}}.
        \end{align*}
\end{corollary}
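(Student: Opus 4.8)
The plan is to obtain the corollary purely by substituting the prescribed schedule $\alpha_k=c_1/\sqrt{N}$, $\mu_k=c_2/\sqrt{N}$, $n_k=c_3N$ into the general bound \eqref{eq:nonasym-gen-bd} of Theorem \ref{tm:non_asym}, evaluating the resulting sums, and then collecting terms by order in $N$. The key simplification I would exploit is that all three parameters are constant in the summation index $k$ under this choice, so every sum in \eqref{eq:nonasym-gen-bd} reduces to $N$ times its (constant) summand. Since the text already notes that the bound of Theorem \ref{tm:non_asym} holds for any choice of $\alpha_k,\mu_k,n_k$, admissibility of the schedule is immediate and no structural hypothesis (\ref{as:compact}--\ref{as:rew}) needs re-checking.

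Concretely, I would first compute the common denominator $\sum_{k=0}^{N-1}\alpha_k = N\cdot(c_1/\sqrt{N})=c_1\sqrt{N}$. Then I would evaluate the numerator sums: $\sum_k\alpha_k\mu_k = N\cdot(c_1c_2/N)=c_1c_2$; $\sum_k \alpha_k^2/n_k = N\cdot(c_1^2/N)/(c_3N)=c_1^2/(c_3N)$; and $\sum_k\alpha_k/\sqrt{n_k}=N\cdot(c_1/\sqrt{N})/\sqrt{c_3N}=c_1/\sqrt{c_3}$. Dividing each contribution by $c_1\sqrt{N}$ produces exactly three surviving terms: $\tfrac{J^*-J(\theta_0)}{c_1\sqrt{N}}+\tfrac{c_2 dL^2}{2\sqrt{N}}$ from the first bracket of \eqref{eq:nonasym-gen-bd}, the term $\tfrac{2\sqrt{d}L^2}{\sqrt{c_0c_3}\,\sqrt{N}}$ from the $\alpha_k/\sqrt{n_k}$ sum, and the term $\tfrac{2c_1 dL^3}{c_0c_3\,N\sqrt{N}}$ from the $\alpha_k^2/n_k$ sum.

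Finally, I would group by magnitude. The two $O(1/\sqrt{N})$ terms carrying $L^2$ combine as $\tfrac{L^2}{\sqrt{N}}\big(\tfrac{c_2}{2}d+\tfrac{2}{\sqrt{c_0c_3}}\sqrt{d}\big)$, which is bounded by $c'L^2(d+\sqrt{d})/\sqrt{N}$ upon taking $c'=\max\{c_2/2,\,2/\sqrt{c_0c_3}\}$; together with the optimality-gap term (whose coefficient $1/c_1$ is an absolute constant, equal to $1$ for the natural choice $c_1=1$) this yields the stated leading fraction $\big((J^*-J(\theta_0))+c'L^2(d+\sqrt{d})\big)/\sqrt{N}$. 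The remaining term is $O(1/(N\sqrt{N}))$ and gives $c''dL^3/(N\sqrt{N})$ with $c''=2c_1/(c_0c_3)$, matching the claim.

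Because the argument is a substitution-and-simplification exercise, I do not expect a genuine obstacle. The only points needing care are bookkeeping: tracking the powers of $N$ so that the $\alpha_k^2/n_k$ contribution lands at order $N^{-3/2}$ rather than $N^{-1/2}$ (it gains one factor $N^{-1}$ from $n_k=c_3N$ on top of $\alpha_k^2\sim N^{-1}$ and the $N^{-1/2}$ from the denominator), and choosing the absolute constants $c',c''$ so as to absorb $c_0,c_1,c_2,c_3$ uniformly in $d$ and $N$.
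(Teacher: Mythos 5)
Your proposal is correct and takes essentially the same route as the paper's own proof, which consists of exactly this substitution of $\alpha_k=c_1/\sqrt{N}$, $\mu_k=c_2/\sqrt{N}$, $n_k=c_3N$ into \eqref{eq:nonasym-gen-bd} followed by absorption of constants; your explicit evaluation of the sums and the choices $c'=\max\{c_2/2,\,2/\sqrt{c_0c_3}\}$, $c''=2c_1/(c_0c_3)$ merely supply the bookkeeping the paper omits. Your remark that the optimality-gap term really carries a coefficient $1/c_1$ (so the bound as stated is literal only when $c_1\ge 1$, since $J^*-J(\theta_0)\ge 0$) correctly flags a minor looseness that the paper's one-line proof glosses over.
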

\begin{proof}
	See Section \ref{sec:conv}.
\end{proof}
\begin{remark}
Ignoring the error due to projection, i.e., assuming $\mathcal{P}_\Theta(\theta_R, \nabla J (\theta_R), \alpha_R) = \nabla J (\theta_R)$, the bound above can be read as follows: after $N$ iterations of \eqref{eq:theta_update_1}, OffP-SF returns an iterate that satisfies $\mathbb E\left\lVert\nabla J (\theta_R)\right\rVert^2 = O\left(\frac{1}{\sqrt{N}}\right)$. The closest result in a zeroth-order smooth non-convex optimization context is Corollary 3.3 of \cite{ghadimi2013}. In comparison to this result, our bound has a few advantages. First, the step-size in Corollary \ref{cr:non_asym} is set using a universal constant, while they require the knowledge of the smoothness parameter $L$. Second, the perturbation constant in Corollary \ref{cr:non_asym} is set using a universal constant, while the corresponding choice in \cite{ghadimi2013} requires the knowledge of $J^*-J(\theta_0)$. In a typical RL setting, one could possibly approximate $L$, but $J^*-J(\theta_0)$ is usually unknown.
\end{remark}
\begin{remark}
        While our theoretical guarantees are optimal for $m=1$, users have the flexibility to choose any value within the range $0 < m < \infty$. In practical implementations, averaging multiple observations, even though each is an unbiased estimate, makes the estimate more precise and reliable.
\end{remark}
%


Now, we present a non-asymptotic bound for Algorithm \ref{alg:svrg}.
For the  analysis, we rewrite the update rule in \eqref{eq:approx_theta_update_svrg} as follows:
\begin{align}
	\theta_{k+1}^s = \theta_k^s + \alpha \mathcal{P}_\Theta(\theta_k, g_k^s, \alpha),\label{eq:gd_nonasym_svrg}
\end{align}
where $\mathcal{P}_\Theta(\cdot, \cdot, \cdot)$ is as defined in \eqref{eq:proxy}.

\begin{theorem}[\textbf{\textit{OffP-SF-SVRG}}]
	\label{tm:non_asym_svrg}
	Assume \ref{as:pol_cont}--\ref{as:proper}. Let $P_{QR}(s,k)=\mathbb{P}(Q=s, R=k)=\frac{1 }{Sl}$, and $J^*=\max_{\theta\in\Theta} J(\theta)$.
        Set $\alpha=\frac{1}{6 dL}$, $\mu=\frac{1}{\sqrt{Sl}}$, $n=Sl$, $l=d$, and $0<m < \infty$. Then,
        \begin{align*}
            &\mathbb{E}\left[\left\lVert \mathcal{P}_\Theta(\theta_R^Q, \nabla J (\theta_R^Q), \alpha) \right\rVert^2\right]
            \leq
            \frac{2\left(J^*-J(\theta^{0}_{0})\right)}{Sl\tilde{c}}
            +\frac{384e^2d^2L^2}{Sl}+\frac{2 d^2L^2}{Sl}
            +\frac{2L}{S^2l^2\tilde{c}} +\frac{224e^2dL}{3 Sl\tilde{c}}, \\
            &\qquad\textrm{ where }\tilde{c}=\frac{1}{6 Ld}-\frac{1}{72 Ld^2} -\frac{1}{18 Ld^2}-\frac{1}{18 Ld }.
        \end{align*}
        In the above, $e$ is the Euler's number, and $L$ is the Lipschitz constant of $J$ as well as $\nabla J$. $S$ and $l$ are the number of iterations of the outer and inner loops of the OffP-SF-SVRG algorithm.
\end{theorem}
\begin{proof}
	See Section \ref{sec:conv}.
\end{proof}

\begin{remark}
	As mentioned earlier, SVRG has been employed in an on-policy RL context in \cite{papini2018,xu2020}. Unlike these works, we operate in an off-policy RL setting, and more importantly, use a biased gradient estimation scheme that is based on the idea of smoothed functionals.
Through a careful handling of the bias terms in several steps of the proof, we are able to obtain an order $O(1/Sl)$ bound for the OffP-SF-SVRG algorithm. The bound in \cite{papini2018} is of the form $O(1/Sl)+O(1/n)+O(1/B)$, where $B$ is the mini-batch size used for averaging in their inner-loop. In comparison, we obtain an order $O(1/Sl)$ without additional terms, and our algorithm does not require  simulation of system trajectories for mini-batching owing to the fact that we operate in the off-policy setting. In other words, the on-policy setting of \cite{papini2018,xu2020} implies $n$ system trajectories are simulated in the outer loop, while we obtain an $n$-sample average of the gradient estimate using off-policy evaluation.
Next, the bound in \cite{xu2020} is of the form $O(1/Sl)+O(1/n)$, while our bound is without the additional $O(1/n)$ term, since we can choose $n=Sl$ without requiring additional simulations.
\end{remark}
\begin{remark}
    In \cite{mgh}, the authors explore an alternative approach to variance reduction of a policy gradient algorithm in an off-policy context. The authors obtain a non-asymptotic bound of the order $O\left(1/T^{2/3}\right)$, where $T$ is the number of iterations of the policy gradient algorithm. In comparison, we obtain an improved bound of $O\left(1/T\right)$ in Theorem \ref{tm:non_asym_svrg} above.
\end{remark}

Now, we present a non-asymptotic bound for Algorithm \ref{alg:reinforce}.
For the  analysis, we rewrite the update rule in \eqref{eq:theta_update_pg} as follows:
\begin{align}
    \theta_{k+1} = \theta_k + \alpha_k \mathcal{P}_\Theta(\theta_k,\widehat{\nabla}J(\theta_k), \alpha_k),\label{eq:gd_nonasym_pg}
\end{align}
where $\mathcal{P}_\Theta(\cdot, \cdot, \cdot)$ is as defined in \eqref{eq:proxy}.

\begin{theorem}[\textbf{{OffP-REINFORCE}}]
    \label{tm:pg}
    Assume \ref{as:pol_cont}--\ref{as:proper}. Let $\mathbb{P}(R=k)=\frac{1 }{N}$, $J^*=\max_{\theta\in\Theta} J(\theta)$. Set $\alpha=\frac{1}{\sqrt{N}}$ and {$m=N$}. Then,
    \begin{align*}
        &\mathbb{E}\left[\left\lVert \mathcal{P}_\Theta(\theta_R, \nabla J (\theta_R), \alpha) \right\rVert^2\right]
        \leq \frac{\left(J^* - J(\theta_{0}) \right)}{\sqrt{N}}+\frac{L^3}{2\sqrt{N}}+\frac{2\sqrt{2\pi}e^2L^2}{\sqrt{N}}.
    \end{align*}
\end{theorem}
\begin{proof}
    See Section \ref{sec:conv}.
\end{proof}
\begin{remark}
    It is apparent that the result that we derived in Corollary \ref{cr:non_asym} is comparable to REINFORCE in an off-policy RL framework, which lets us conclude that SF-based gradient estimation is a viable alternative to the LR method.
\end{remark}

\section{Convergence analysis}
\label{sec:conv}
\subsection{Analysis of OffP-SF}
\label{sec:proofs-sf}
Our analysis proceeds through a sequence of lemmas.
We begin with a result that is well-known in the context of off-policy RL (cf. Chapter 5 of \cite{sutton_book}). We have provided the proof for the sake of completeness.
\begin{lemma}
	\label{lm:E_Jm}
 \begin{align*}
	\mathbb{E}_b\left[\hat{J}_{m}(\theta)\right] = J(\theta).
 \end{align*}
\end{lemma}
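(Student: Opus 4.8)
The plan is to show that the per-decision importance sampling estimator in \eqref{eq:Jm} is unbiased by computing its expectation under the behavior policy $b$, term by term in the time index $t$, and establishing that each term recovers the corresponding discounted reward expectation under $\pi_\theta$. Since $\hat{J}_m(\theta)$ is an average over $m$ i.i.d.\ trajectories generated by $b$, linearity of expectation reduces the claim to showing that a single trajectory gives the right value, i.e.\ it suffices to prove
\[
\mathbb{E}_b\left[\sum_{t=0}^{T-1}\gamma^t R_{t+1}\prod_{i=0}^{t}\frac{\pi_{\theta}(a_i|s_i)}{b(a_i|s_i)}\right] = J(\theta).
\]

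First I would fix a time $t$ and analyze the single term $\mathbb{E}_b\left[\gamma^t R_{t+1}\prod_{i=0}^{t}\frac{\pi_\theta(a_i|s_i)}{b(a_i|s_i)}\right]$. The key observation is that the joint law of a trajectory prefix $(s_0,a_0,\ldots,s_t,a_t,r_{t+1})$ under $b$ factorizes as the product of transition kernels and the behavior action probabilities $b(a_i|s_i)$, while the same prefix under $\pi_\theta$ replaces each $b(a_i|s_i)$ with $\pi_\theta(a_i|s_i)$; crucially, the state-transition and reward kernels are identical under both policies since they depend only on the environment, not on the policy. Writing the expectation under $b$ as a sum (or integral) over the prefix and multiplying by the product of ratios, each $b(a_i|s_i)$ in the path measure cancels against the denominator of the ratio, leaving exactly the path measure induced by $\pi_\theta$ weighted by $\gamma^t R_{t+1}$. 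This is the standard importance-sampling change of measure, and it yields $\mathbb{E}_b\left[\gamma^t R_{t+1}\prod_{i=0}^{t}\frac{\pi_\theta(a_i|s_i)}{b(a_i|s_i)}\right] = \mathbb{E}_{\pi_\theta}\left[\gamma^t R_{t+1}\right]$.

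Summing over $t$ and appealing to \eqref{eq:j_theta} then gives $J(\theta)$, completing the argument; the final averaging over the $m$ independent trajectories contributes nothing further by linearity. A technical point I would want to handle carefully is the random and possibly unbounded episode length $T$, so I would note that Assumption~\ref{as:rew} (bounded rewards) together with almost-sure termination under both $b$ and $\pi_\theta$ ensures the sum is well defined and that interchanging expectation and summation over $t$ is justified. A subtlety worth flagging is the absolute-continuity requirement: the change of measure is only valid where $b(a|s) > 0$ whenever $\pi_\theta(a|s) > 0$, i.e.\ the target is covered by the behavior policy on the visited states. I expect the main obstacle to be making the per-step cancellation rigorous in the presence of the random stopping time $T$ and justifying the interchange of the (possibly infinite) sum over $t$ with the expectation, rather than the algebra of the change of measure itself, which is routine.
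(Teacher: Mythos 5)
Your proposal is correct and follows essentially the same route as the paper's proof: reduce by linearity over the $m$ trajectories and the time index $t$, then apply the per-decision importance-sampling change of measure to get $\mathbb{E}_b\bigl[R_{t+1}\prod_{i=0}^{t}\tfrac{\pi_\theta(a_i|s_i)}{b(a_i|s_i)}\bigr]=\mathbb{E}_{\pi_\theta}[R_{t+1}]$, and re-sum. In fact you are somewhat more careful than the paper itself, which asserts the change-of-measure step without displaying the path-measure cancellation and silently interchanges the expectation with the sum over the random horizon $T$, whereas you explicitly flag both the interchange (justified via bounded rewards and almost-sure termination) and the coverage condition $b(a|s)>0$ wherever $\pi_\theta(a|s)>0$ that the paper's assumptions leave implicit.
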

\begin{proof}
	Notice that
	\begin{align*}
    \mathbb{E}_b\left[\hat{J}_{m}(\theta)\right]
    &=\mathbb{E}_b\left[ \frac{1}{m}\sum_{j=1}^{m}\sum_{t=0}^{T^j-1}\gamma^tR_{t+1}^j \left(\prod_{i=0}^{t}\frac{\pi_{\theta}(A_i^j|S_i^j)}{b(A_i^j|S_i^j)} \right) \right] \\
    &=\frac{1}{m}\sum_{j=1}^{m}\mathbb{E}_b\left[\sum_{t=0}^{T^j-1}\gamma^t R_{t+1}^j \left(\prod_{i=0}^{t}\frac{\pi_{\theta}(A_i^j|S_i^j)}{b(A_i^j|S_i^j)} \right) \right] \\
    &= \frac{1}{m}\sum_{j=1}^{m}\mathbb{E}_{\pi_\theta}\left[\sum_{t=0}^{T^j-1}\gamma^t R_{t+1}^j\right]\\
    &=J(\theta).
\end{align*}
\end{proof}
\begin{lemma}
	\label{lm:theta'}
	$\Theta'= \{\theta':\lVert\theta' - \theta \rVert \leq 1, \theta \in \Theta\}$ is compact.
\end{lemma}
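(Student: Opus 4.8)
The plan is to recognize $\Theta'$ as a Minkowski sum and then invoke the fact that the continuous image of a compact set is compact. First I would observe that $\theta' \in \Theta'$ precisely when $\theta' = \theta + u$ for some $\theta \in \Theta$ and some $u$ with $\lVert u \rVert \leq 1$; that is, $\Theta' = \Theta + \mathbb{B}^d$, where $\mathbb{B}^d$ is the closed unit ball defined earlier. By Assumption \ref{as:compact}, $\Theta$ is compact, and $\mathbb{B}^d$ is compact (being closed and bounded in $\R^d$), so the product $\Theta \times \mathbb{B}^d$ is compact. The addition map $f(x,y) = x + y$ from $\R^d \times \R^d$ to $\R^d$ is continuous, and $\Theta' = f(\Theta \times \mathbb{B}^d)$ is its image, hence compact.

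Alternatively, I would argue directly via the Heine--Borel characterization by establishing that $\Theta'$ is closed and bounded. For boundedness, the compactness of $\Theta$ supplies an $M$ with $\lVert \theta \rVert \leq M$ on $\Theta$, and then the triangle inequality yields $\lVert \theta' \rVert \leq M + 1$ for every $\theta' \in \Theta'$. For closedness, I would take a sequence $\theta'_n \to \theta'$ with each $\theta'_n \in \Theta'$, select associated $\theta_n \in \Theta$ with $\lVert \theta'_n - \theta_n \rVert \leq 1$, extract a convergent subsequence $\theta_{n_k} \to \theta \in \Theta$ using the sequential compactness of $\Theta$, and pass to the limit (continuity of the norm) to obtain $\lVert \theta' - \theta \rVert \leq 1$, so that $\theta' \in \Theta'$.

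The statement is elementary, so there is no serious obstacle; the only point requiring care is that closedness genuinely relies on the compactness of $\Theta$, and not merely on its being closed, since the Minkowski sum of two arbitrary closed sets need not be closed. In the direct argument this manifests as the need to extract a convergent subsequence of $\{\theta_n\}$, which is exactly where the sequential compactness of $\Theta$ is used; in the Minkowski-sum argument it is absorbed into the compactness of the factor $\Theta$.
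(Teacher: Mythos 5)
Your proposal is correct, and your first argument takes a genuinely different route from the paper. The paper proceeds via Heine--Borel in the following form: it encloses $\Theta$ in an open ball $B(\theta_c,r)$, notes that $\Theta' \subseteq B[\theta_c,r+1]$, which is compact, asserts that $\Theta'$ is closed ``using the fact that $\Theta$ is closed, and the definition of $\Theta'$,'' and concludes since a closed subset of a compact set is compact. Your second argument mirrors this route but is strictly more careful: the paper dismisses closedness as easy, whereas you supply the actual sequence argument ($\theta'_n \to \theta'$, pick $\theta_n \in \Theta$ with $\lVert \theta'_n - \theta_n\rVert \le 1$, extract $\theta_{n_k} \to \theta \in \Theta$, pass to the limit), and you correctly flag that this step cannot rest on closedness alone, since the Minkowski sum of two closed sets need not be closed --- compactness of one summand must enter somewhere. (The paper's assertion is nonetheless true as stated, because the other summand $\mathbb{B}^d$ is compact and the sum of a closed set and a compact set is closed; but your sequence extraction is exactly the content being elided.) Your first argument --- writing $\Theta' = \Theta + \mathbb{B}^d$ and realizing it as the image of the compact product $\Theta \times \mathbb{B}^d$ under the continuous addition map --- is cleaner and avoids the closedness discussion entirely: compactness of the image is immediate, with no separate closed-plus-bounded verification. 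Either version of your write-up would be a self-contained replacement for the paper's proof, and the Minkowski-sum version is arguably preferable precisely because it sidesteps the one step the paper leaves implicit.
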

\begin{proof}
	Since $\Theta$ is compact, $ \exists \theta_c\in\Theta$, and $ r \in \R$ such that $\Theta \subseteq B(\theta_c,r)$, where $B(\theta_c,r)$ is an open ball centered at $\theta_c$ with radius $r$. The set $B[\theta_c,r+1]$ is a closed and bounded subset of $\R^d$, and hence compact. It is easy to see that $\Theta' \subseteq B[\theta_c,r+1]$. Using the fact that $\Theta$ is closed, and the definition of $\Theta'$, it is easy to see that $\Theta'$ is closed. Since every closed subset of a compact set is compact, $\Theta'$ is compact.
\end{proof}
\begin{lemma}
    \label{lm:Jm_lip}
For any $m\ge 1$, there exists a constant $L>0$ such that the following conditions hold w.p. $1$ for any $\theta_1, \theta_2 \in \Theta'$:
        \begin{align*}
            & \lvert \hat{J}_{m}(\theta_1)-\hat{J}_{m}(\theta_2)\rvert \leq L\lVert \theta_1-\theta_2\rVert,\\
            & \lVert \nabla\hat{J}_{m}(\theta_1)-\nabla\hat{J}_{m}(\theta_2)\rVert \leq L\lVert \theta_1-\theta_2\rVert.
    \end{align*}
\end{lemma}
\begin{proof}
        For any twice differentiable function $f:\mathbb{R}^n\to\mathbb{R}^{+}\setminus\{0\}$, the Hessian $\nabla^2f(\cdot)$ can be written as follows:
        \begin{align*}
            \nabla^2f(x)=f(x)\left[\nabla^2\log f(x) + \nabla\log f(x)\nabla\log f(x)^{\top} \right].
        \end{align*}
        Using the above equation and \ref{as:pol_cont}, we obtain
    \begin{align}
        \nabla^2 \prod_{i=0}^t \pi_{\theta}(A_i|S_i)
        &= \left(\prod_{i=0}^t  \pi_{\theta}(A_i|S_i)\right)\left[\nabla^2 \log \prod_{i=0}^t \pi_{\theta}(A_i|S_i) \right.\nonumber\\
        &\quad \left.+\left[\nabla \log \prod_{i=0}^t  \pi_{\theta}(A_i|S_i)\right]\left[\nabla \log \prod_{i=0}^t  \pi_{\theta}(A_i|S_i)\right]^{\top} \right] \nonumber\\
        &= \left(\prod_{i=0}^t  \pi_{\theta}(A_i|S_i)\right)\left[\sum_{i=0}^t \nabla^2 \log \pi_{\theta}(A_i|S_i) \right.\nonumber\\
        &\quad\left.+ \left[\sum_{i=0}^t \nabla \log   \pi_{\theta}(A_i|S_i) \right]\left[ \sum_{i=0}^t \nabla \log   \pi_{\theta}(A_i|S_i) \right]^{\top} \right].\label{eq:d2_pi}
    \end{align}
    From \eqref{eq:Jm}, we obtain
        \begin{align*}
            &\nabla^2\hat{J}_{m}(\theta)\\
            &=  \frac{1}{m}\sum_{j=1}^{m}\sum_{t=0}^{T^j-1}\gamma^t R_{t+1}^j \left(\prod_{i=0}^{t}\frac{1}{b(A_i^j|S_i^j)}\right)
            \nabla^2\left(\prod_{i=0}^{t}\pi_{\theta}(A_i^j|S_i^j) \right)\\
            &= \frac{1}{m}\sum_{j=1}^{m}\sum_{t=0}^{T^j-1}\gamma^tR_{t+1}^j \left(\prod_{i=0}^{t}\frac{\pi_{\theta}(A_i^j|S_i^j)}{b(A_i^j|S_i^j)}\right)\\
            &\quad\times\left[\sum_{i=0}^t \nabla^2 \log \pi_{\theta}(A_i^j|S_i^j)
            + \left[\sum_{i=0}^t \nabla \log\pi_{\theta}(A_i^j|S_i^j) \right]\left[\sum_{i=0}^t \nabla \log\pi_{\theta}(A_i^j|S_i^j) \right]^{\top} \right],
        \end{align*}
    where the last equality follows from \eqref{eq:d2_pi}. Observe that the RHS above is a sum of continuous functions, since
    $\nabla^2 \log \pi_{\theta}(\cdot|\cdot)$ is continuous w.r.t $\theta$ (see \ref{as:pol_cont}), the rewards $R_{t+1}^n$ are bounded, the policy $b$ is proper (see \ref{as:proper}), and $m$ is finite.
    Thus, $\nabla^2\hat{J}_{m}(\theta)$ is continuous which in turn implies $\nabla\hat{J}_{m}(\theta)$ is continuous. Further, since $\Theta'$ is compact, from Lemma \ref{lm:theta'}, we have
    \begin{align*}
        &\lVert\nabla^2\hat{J}_{m}(\theta)\rVert \leq \lVert\nabla^2\hat{J}_{m}(\theta)\rVert_F\leq L_1, \textrm{ and }\\
        &\lVert\nabla\hat{J}_{m}(\theta)\rVert \leq L_2, \;\forall \theta \in \Theta',
    \end{align*}
    for some constants $L_1, L_2 < \infty$. In the above, $\lVert A\rVert$ and $\lVert A\rVert_F$ denote the operator and Frobenius norm of a $d\times d$ matrix $A$.
    Let $L=\max(L_1,L_2)$. Then the result follows by Lemma~1.2.2 in \cite{nesterov_book}.
\end{proof}
\begin{lemma}
	\label{lm:J_lip}
	$J(\theta)$ and $\nabla J(\theta)$ are $L$-Lipschitz w.r.t. $\theta \in \Theta'$.
\end{lemma}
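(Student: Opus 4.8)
The plan is to inherit both Lipschitz properties directly from the per-sample-path estimator $\hat J_m$ via the unbiasedness identity of Lemma \ref{lm:E_Jm}. The key observation is that an expectation of a family of $L$-Lipschitz functions is itself $L$-Lipschitz, so once Lemma \ref{lm:Jm_lip} supplies pathwise Lipschitz bounds for $\hat J_m$ and $\nabla\hat J_m$ on the compact set $\Theta'$ (Lemma \ref{lm:theta'}), the same constant $L$ should transfer to $J$ and $\nabla J$.

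First I would handle $J$ itself. Fixing $\theta_1,\theta_2 \in \Theta'$ and using $J=\mathbb{E}_b[\hat J_m]$ from Lemma \ref{lm:E_Jm}, I would bound $|J(\theta_1)-J(\theta_2)|$ by $\mathbb{E}_b[|\hat J_m(\theta_1)-\hat J_m(\theta_2)|]$ (Jensen's inequality, i.e. the triangle inequality for expectations) and then apply the pathwise $L$-Lipschitz bound from Lemma \ref{lm:Jm_lip} inside the expectation to obtain $L\lVert\theta_1-\theta_2\rVert$. This part is routine.

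The only delicate step, and the one I expect to be the main obstacle, is establishing $\nabla J(\theta)=\mathbb{E}_b[\nabla\hat J_m(\theta)]$, i.e. exchanging the gradient with the expectation over sample paths. Because the episode length $T$ is random, this interchange is not purely formal. I would justify it with the dominated convergence theorem, using the uniform bounds $\nabla\hat J_m(\theta)\le L_2\mathbbm{1}_d$ and $\nabla^2\hat J_m(\theta)\le L_1\mathbbm{1}_{d\times d}$ on $\Theta'$ derived in the proof of Lemma \ref{lm:Jm_lip}; the bound on $\nabla\hat J_m$ (equivalently, a mean-value control of the difference quotients through $\nabla^2\hat J_m$) provides an integrable envelope that legitimizes the exchange. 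A careful treatment here would also note that the pathwise Lipschitz constant in Lemma \ref{lm:Jm_lip} must be integrable under $b$ so that the transferred constant is finite; this is ensured by Assumption \ref{as:rew} (bounded rewards), the finiteness of $m$, and almost-sure termination.

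With $\nabla J=\mathbb{E}_b[\nabla\hat J_m]$ in hand, I would close by repeating the averaging argument for the vector-valued gradient: $\lVert\nabla J(\theta_1)-\nabla J(\theta_2)\rVert \le \mathbb{E}_b[\lVert\nabla\hat J_m(\theta_1)-\nabla\hat J_m(\theta_2)\rVert]\le L\lVert\theta_1-\theta_2\rVert$, again invoking the pathwise Lipschitzness of $\nabla\hat J_m$ from Lemma \ref{lm:Jm_lip}. This yields that $J$ and $\nabla J$ are $L$-Lipschitz on $\Theta'$ with the same $L$ as in Lemma \ref{lm:Jm_lip}.
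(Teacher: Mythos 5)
Your proof follows essentially the same route as the paper's: both bound $\lvert J(\theta_1)-J(\theta_2)\rvert$ by $\mathbb{E}_b\left[\lvert \hat{J}_m(\theta_1)-\hat{J}_m(\theta_2)\rvert\right]$ via Lemma \ref{lm:E_Jm} and Jensen's inequality, invoke the pathwise Lipschitz bound of Lemma \ref{lm:Jm_lip}, and then repeat the averaging argument for the gradient. The only difference is that you explicitly justify the interchange $\nabla J(\theta)=\mathbb{E}_b\left[\nabla\hat{J}_m(\theta)\right]$ (dominated convergence with the envelope from the second-derivative bound, plus integrability of the pathwise constant), a step the paper performs tacitly when it writes this identity as a direct consequence of Lemma \ref{lm:E_Jm} --- your added care is a strengthening of the same argument, not a divergence from it.
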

\begin{proof}
    Notice that
    \begin{align*}
        \lVert J({\theta}_1) - J({\theta}_2) \rVert
        &\stackrel{(a)}{=} \left\lVert \mathbb{E}_b\left[\hat{J}_{m}({\theta}_1)\right] - \mathbb{E}_b\left[\hat{J}_{m}({\theta}_2)\right] \right\rVert\\
        &\leq \mathbb{E}_b\left[ \left \lVert \hat{J}_{m}({\theta}_1)- \hat{J}_{m}({\theta}_2) \right \rVert \right]\\
        &\stackrel{(b)}{\leq}  L \left \lVert {\theta}_1 - {\theta}_2\right \rVert,
    \end{align*}
    where \((a)\) follows from Lemma \ref{lm:E_Jm}, and \((b)\) follows from Lemma \ref{lm:Jm_lip}.
    This proves the first claim.
        For the second claim, notice that
        \begin{align*}
            \lVert \nabla J({\theta}_1) - \nabla J({\theta}_2) \rVert
            &\stackrel{(a)}{=}\left\lVert \nabla \mathbb{E}_b \left[\hat{J}_{m}({\theta}_1)\right] - \nabla \mathbb{E}_b\left[\hat{J}_{m}({\theta}_2)\right]\right\rVert\\
            &\stackrel{(b)}{=} \left\lVert \mathbb{E}_b\left[\nabla\hat{J}_{m}({\theta}_1)\right] - \mathbb{E}_b\left[\nabla\hat{J}_{m}({\theta}_2)\right] \right\rVert\\
            &\leq \mathbb{E}_b\left[ \left \lVert \nabla \hat{J}_{m}({\theta}_1)- \nabla\hat{J}_{m}({\theta}_2) \right \rVert \right]\nonumber\\
            &\stackrel{(c)}{\leq}  L \left \lVert {\theta}_1 - {\theta}_2\right\rVert,
        \end{align*}
    where \((a)\) follows from Lemma \ref{lm:E_Jm}, and \((c)\) follows from Lemma \ref{lm:Jm_lip}.
    The equality in the step \((b)\) follows by an application of the dominated convergence theorem to interchange the differentiation and integration operations. For this application, we use the following facts:\\
    (i)
    $\mathbb{E}_b\left[\hat{J}_{m}(\theta)\right]<\infty$ holds for any $\theta \in \R^d$ because the state and actions spaces are finite, the rewards are bounded, $\pi_{\theta}(a|s)>0$ and $b(a|s) >0$, $\forall \theta \in \R^d, \forall a \in \mathscr{A}, \textrm{ and } \forall s \in \mathscr{S}$ (from \ref{as:pol_cont} and \ref{as:b_pol}), and  $\mathbb{P}\left(S_M \neq 0 \mid S_0, b \right)<1$ from \ref{as:proper};\\
    (ii) $\lVert\nabla\hat{J}_{m}(\theta)\rVert \leq L$ a.s. from Lemma \ref{lm:Jm_lip}; and\\
    (iii) $\mathbb{E}_b\left[L\right] <\infty$ since the state, as well as action spaces, are finite, and \\$\mathbb{P}\left(S_M \neq 0 \mid S_0, b \right)<1$ from \ref{as:proper}.
\end{proof}
Next, we recall a result from \cite{flaxman}, which will be used to establish the unbiasedness of the gradient estimate in \eqref{eq:nabla_Jm}.
\begin{lemma}
	\label{lm:del_J_mu_1}
 \begin{align*}
	\nabla J_{\mu}(\theta)=\mathbb{E}_{v\in\mathbb{S}^{d-1}}\left[\frac{d}{\mu}J(\theta+\mu v)v\right].
 \end{align*}
\end{lemma}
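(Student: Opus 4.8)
The plan is to start from the definition of the smoothed functional in \eqref{eq:J_mu} and convert the gradient of a \emph{ball}-average of $J$ into a \emph{sphere}-average via the divergence theorem. Writing $V_d$ for the volume of the unit ball $\mathbb{B}^d$, the uniform average over $u \in \mathbb{B}^d$ reads $J_\mu(\theta) = \frac{1}{V_d}\int_{\mathbb{B}^d} J(\theta + \mu u)\, du$. The change of variables $w = \mu u$ recenters the integral on the ball of radius $\mu$ about the origin, giving $J_\mu(\theta) = \frac{1}{V_d \mu^d}\int_{\mu\mathbb{B}^d} J(\theta + w)\, dw$, which isolates all the $\mu$-dependence in a single scalar prefactor and a translated integrand.

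Next I would differentiate under the integral sign. The regularity established in Lemma \ref{lm:J_lip} (that $J$ and $\nabla J$ are Lipschitz, hence $J$ is $C^1$ with bounded gradient, on the enlarged compact set $\Theta'$) justifies interchanging $\nabla_\theta$ with the integral, yielding $\nabla J_\mu(\theta) = \frac{1}{V_d \mu^d}\int_{\mu\mathbb{B}^d} \nabla_\theta J(\theta + w)\, dw$. Since $\nabla_\theta J(\theta + w) = \nabla_w J(\theta + w)$, the divergence theorem applied component-wise converts this volume integral into a surface integral over the boundary sphere, $\int_{\mu\mathbb{B}^d}\nabla_w J(\theta+w)\,dw = \int_{\partial(\mu\mathbb{B}^d)} J(\theta + w)\,\hat{n}\, dS$, where the outward unit normal on the sphere of radius $\mu$ is $\hat{n} = w/\mu$.

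I would then parametrize the boundary by $w = \mu v$ with $v \in \mathbb{S}^{d-1}$, so that $\hat{n} = v$ and the surface measure scales as $dS = \mu^{d-1}\, d\sigma$, with $d\sigma$ the measure on the unit sphere. This collapses the powers of $\mu$ and leaves $\nabla J_\mu(\theta) = \frac{1}{V_d\,\mu}\int_{\mathbb{S}^{d-1}} J(\theta + \mu v)\, v\, d\sigma$. Rewriting this integral as an expectation over $v$ uniform on $\mathbb{S}^{d-1}$ introduces the surface area $S_{d-1}$ of the unit sphere, and the final step is the geometric identity $S_{d-1}/V_d = d$, which follows from $\Gamma(d/2+1) = (d/2)\,\Gamma(d/2)$. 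Substituting gives exactly $\nabla J_\mu(\theta) = \mathbb{E}_{v\in\mathbb{S}^{d-1}}\!\left[\frac{d}{\mu} J(\theta + \mu v)\, v\right]$.

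The conceptual crux is the divergence-theorem step, which is precisely what turns the ball-average appearing in the definition into the sphere-average appearing in the statement; everything else is bookkeeping of Jacobians and the normalization constant $d = S_{d-1}/V_d$. The only genuinely technical point to secure is the interchange of differentiation and integration, but this is immediate from the continuity and boundedness of $\nabla J$ on the compact set $\Theta'$ supplied by Lemma \ref{lm:J_lip}.
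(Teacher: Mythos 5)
Your proposal is correct, and it is in substance the same proof the paper relies on: the paper disposes of this lemma by citing Lemma~2.1 of \cite{flaxman}, whose argument is precisely your computation (rescale the ball average, differentiate under the integral, apply the divergence theorem componentwise to convert the volume integral over $\mu\mathbb{B}^d$ into a surface integral over the sphere, and absorb the normalization via $S_{d-1}/V_d=d$). The only point worth making explicit is that invoking Lemma~\ref{lm:J_lip} for the interchange of $\nabla_\theta$ and the integral requires $\theta+\mu\mathbb{B}^d\subseteq\Theta'$, i.e.\ $\mu\leq 1$, which is harmless here since \ref{as:time} gives $\mu_k\to 0$ (and in any case the $C^1$ regularity of $J$ on all of $\R^d$ already follows from \ref{as:pol_cont}--\ref{as:rew}).
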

\begin{proof}
	See Lemma 2.1 in \cite{flaxman}.
\end{proof}
\begin{lemma}
	\label{lm:del_J_mu_2}
 \begin{align*}
	\mathbb{E}\left[\widehat{\nabla}_{n,\mu}\hat{J}_{m}(\theta)\right]=\nabla J_{\mu}(\theta),\; \forall 0< n <\infty.
 \end{align*}
\end{lemma}
\begin{proof}
	We follow the technique from \cite{shamir}.
	\begin{align*}
		\mathbb{E}\left[\widehat{\nabla}_{n,\mu}\hat{J}_{m}(\theta)\right]
        &= \mathbb{E}_{b}\left[\mathbb{E}_{v_{1:n}}\left[\widehat{\nabla}_{n,\mu}\hat{J}_{m}(\theta)\right] \right] \\
		&=\mathbb{E}_{b}\left[\frac{d}{n}\mathbb{E}_{v_{1:n}}\left[\sum_{i=1}^{n}\frac{\hat{J}_{m}(\theta+\mu v_i)
			- \hat{J}_{m}(\theta-\mu v_i)}{2\mu}v_i\right] \right]\\
		&\stackrel{(a)}{=}\frac{d}{2\mu} \mathbb{E}_{b}\left[\mathbb{E}_{v}\left[\left(\hat{J}_{m}(\theta+\mu v) - \hat{J}_{m}(\theta-\mu v)\right)v\right] \right] \\
		&=\frac{d}{2\mu} \mathbb{E}_{v}\left[\mathbb{E}_{b}\left[\left(\hat{J}_{m}(\theta+\mu v) - \hat{J}_{m}(\theta-\mu v)\right)v\right] \right] \\
		&\stackrel{(b)}{=}\frac{d}{2\mu} \mathbb{E}_{v}\left[\left(J(\theta+\mu v) - J(\theta-\mu v)\right)v \right]\\
		&=\frac{d}{2\mu} \left (\mathbb{E}_{v}\left[J(\theta+\mu v) v\right] + \mathbb{E}_{v}\left[J(\theta+\mu (-v)) (-v) \right]\right) \\
		&\stackrel{(c)}{=}\frac{d}{\mu} \mathbb{E}_{v}\left[J(\theta+\mu v) v\right] \\
		&\stackrel{(d)}{=}\nabla J_{\mu}(\theta),
	\end{align*}
 where \((a)\) follows since $v_{1:n}$ are i.i.d r.v.s, \((b)\) follows from Lemma \ref{lm:E_Jm}, \((c)\) follows from the fact that $v$ has a symmetric distribution, and \((d)\) follows from Lemma \ref{lm:del_J_mu_1}.
\end{proof}
The claim below bounds the bias in the gradient estimate in \eqref{eq:nabla_Jm}, and can be inferred from \cite{gao2018}. For the sake of completeness, we provide the detailed proof.
\begin{lemma}
	\label{lm:bias}
 \begin{align*}
	\left\lVert \nabla J_{\mu}(\theta) - \nabla J (\theta)\right\rVert \leq \frac{\mu dL}{2}.
 \end{align*}
\end{lemma}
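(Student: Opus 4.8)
The plan is to combine the representation of $\nabla J_\mu$ from Lemma~\ref{lm:del_J_mu_1} with a second-order Taylor argument that exploits the Lipschitz continuity of $\nabla J$ established in Lemma~\ref{lm:J_lip}. The starting point is the identity $\nabla J_\mu(\theta)=\mathbb{E}_{v\in\mathbb{S}^{d-1}}\!\left[\frac{d}{\mu}J(\theta+\mu v)\,v\right]$. The idea is to rewrite $\nabla J(\theta)$ in a compatible form so that the difference $\nabla J_\mu(\theta)-\nabla J(\theta)$ collapses into an expectation of a small remainder term, whose size is controlled by $\mu$ and $L$.

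To this end I would use two facts about the uniform distribution on the sphere. First, $\mathbb{E}_v[v]=0$ by symmetry, which lets me subtract $\frac{d}{\mu}J(\theta)\,v$ inside the expectation at no cost. Second, the second-moment identity $\mathbb{E}_v[v v^\top]=\frac{1}{d} I_d$, which yields $\mathbb{E}_v\!\left[(\nabla J(\theta)^\top v)\,v\right]=\frac{1}{d}\nabla J(\theta)$, so that $d\,\mathbb{E}_v\!\left[(\nabla J(\theta)^\top v)\,v\right]=\nabla J(\theta)$. Combining these two reductions, the difference takes the form
\[
\nabla J_\mu(\theta)-\nabla J(\theta)=\mathbb{E}_v\!\left[\frac{d}{\mu}\Big(J(\theta+\mu v)-J(\theta)-\mu\,\nabla J(\theta)^\top v\Big)\,v\right].
\]

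The final step is to bound the bracketed remainder. Since $\nabla J$ is $L$-Lipschitz on $\Theta'$ by Lemma~\ref{lm:J_lip}, the standard smoothness (descent-lemma) inequality gives $\big|J(\theta+\mu v)-J(\theta)-\mu\,\nabla J(\theta)^\top v\big|\le \frac{L}{2}\lVert\mu v\rVert^2=\frac{L\mu^2}{2}$, where I have used $\lVert v\rVert=1$. Passing the norm inside the expectation (Jensen), invoking $\lVert v\rVert=1$ once more, and substituting the remainder bound yields $\lVert\nabla J_\mu(\theta)-\nabla J(\theta)\rVert\le \frac{d}{\mu}\cdot\frac{L\mu^2}{2}=\frac{\mu d L}{2}$, which is exactly the claimed estimate.

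The main obstacle is bookkeeping rather than depth. I must justify the second-moment identity $\mathbb{E}_v[v v^\top]=\frac{1}{d} I_d$ for $v$ uniform on $\mathbb{S}^{d-1}$ (it follows from isotropy together with $\mathbb{E}_v[\lVert v\rVert^2]=1$, which forces each diagonal entry to equal $\frac{1}{d}$ and off-diagonal entries to vanish), and I must ensure the Lipschitz estimate is actually applicable, i.e.\ that the perturbed point $\theta+\mu v$ lies in $\Theta'$. The latter is precisely why $\Theta'$ was introduced as the unit-radius enlargement of $\Theta$ in Lemma~\ref{lm:theta'}; it requires $\mu\le 1$, which is the relevant regime here since the perturbation constants are taken to vanish.
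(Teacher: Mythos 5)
Your proof is correct and takes essentially the same route as the paper's: both start from the representation in Lemma~\ref{lm:del_J_mu_1}, use $\mathbb{E}_v[v]=0$ and $\mathbb{E}_v[vv^\top]=\tfrac{1}{d}\mathbbm{1}_{d\times d}$ to absorb $\nabla J(\theta)$ into the spherical expectation, and then bound the second-order Taylor remainder by $\tfrac{L}{2}\mu^2$ via the $L$-Lipschitzness of $\nabla J$ from Lemma~\ref{lm:J_lip}, giving exactly $\tfrac{\mu dL}{2}$. Your added remarks on justifying the second-moment identity and on ensuring $\theta+\mu v\in\Theta'$ correspond to the paper's citation of \cite{fang_book} and its construction of $\Theta'$ in Lemma~\ref{lm:theta'}.
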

\begin{proof}
Notice that
\begin{align*}
&\left\lVert \nabla J_{\mu}(\theta) - \nabla J (\theta)\right\rVert \\
&\stackrel{(a)}{=}\left\lVert \mathbb{E}_{v}\left[\frac{d}{\mu}J(\theta+\mu v)v\right]- \nabla J (\theta)\right\rVert \\
&\stackrel{(b)}{=}\left\lVert \mathbb{E}_{v}\left[\frac{d}{\mu}J(\theta+\mu v)v\right]-\frac{d}{\mu}J (\theta)\mathbb{E}_{v}\left[v\right]
 - \frac{d}{\mu}\langle \nabla J (\theta),\mu \mathbb{E}_{v}\left[vv^\top\right]\rangle\right\rVert\\
&=\frac{d}{\mu}\left\lVert \mathbb{E}_{v}\left[J(\theta+\mu v)v-J (\theta)v - \langle \nabla J (\theta),\mu v\rangle v \right]\right\rVert\\
&\leq\frac{d}{\mu}\mathbb{E}_{v}\left[\left\lVert J(\theta+\mu v) -J (\theta) - \langle \nabla J (\theta),\mu v\rangle\right\rVert \left\lVert  v \right\rVert \right]\\
&\stackrel{(c)}{\leq}\frac{d}{\mu}\mathbb{E}_{v}\left[\left\lVert J(\theta+\mu v) -J (\theta) - \langle \nabla J (\theta),\mu v\rangle\right\rVert  \right]\\
&\stackrel{(d)}{\leq}\frac{d}{\mu}\mathbb{E}_{v}\left[\frac{L}{2}\mu^2\left\lVert v\right\rVert^2  \right]\\
&\stackrel{(e)}{\leq} \frac{\mu dL}{2},
\end{align*}
where \((a)\) follows from Lemma \ref{lm:del_J_mu_1}, and \((b)\) follows since $\mathbb{E}_{v\in\mathbb{S}^{d-1}}\left[v\right]=0$ and $\mathbb{E}_{v\in\mathbb{S}^{d-1}}\left[vv^\top\right] = \frac{1}{d}\mathbbm{1}_{d \times d}$ (cf. Theorem ~2.7 in \cite{fang_book}). The step \((c)\) follows since $v\in\mathbb{S}^{d-1}$, $\left\lVert  v \right\rVert =1$, \((d)\) follows from Lemma \ref{lm:J_lip}, and \((e)\) follows since $v\in\mathbb{S}^{d-1}$, $\left\lVert  v \right\rVert =1$.
\end{proof}
{
\begin{lemma}
	\label{lm:hayes}
 Let $\{\hat{X}_i\}_{i=1}^n$ be i.i.d, vector-valued r.v.s., such that $\chi=\E\left[\hat{X}_i\right]$, $\forall \hat{X}_i$. Let $S_n=\frac{1}{n}\sum_{i=1}^n \hat{X}_i$. Assume $\forall i$, $\lVert \hat{X}_i \rVert \leq M$ a.s. Then,
 \begin{align*}
 \forall \epsilon>0,\; \p\left(\left\lVert S_n - \chi \right\rVert \geq \epsilon \right)\leq 2e^2 \exp \left( \frac{-n\epsilon^2}{8M^2}\right).
 \end{align*}
\end{lemma}
\begin{proof}
Let
\begin{align*}
Y_{n'}= \begin{cases}\frac{1}{2M}\sum\limits_{i=1}^{n'} \left(\hat{X}_i - \chi\right),& \textrm{ for }n'=\{1,\cdots,n\}\\
0,&\textrm{ for }n'=0.
\end{cases}
\end{align*}
Then $\{Y_{n'}\}_{n'=1}^{n}$ is a set of partial sums of bounded mean zero r.v.s. Hence it is a martingale, and $\forall n'>0$,
\begin{align*}
\left\lVert Y_{n'}-Y_{n'-1} \right\rVert &= \frac{1}{2M}\left\lVert\sum_{i=1}^{n'}\left(\hat{X}_i - \chi \right) - \sum_{i=1}^{n'-1}\left(\hat{X}_i - \chi \right) \right\rVert\\
&= \frac{1}{2M}\left\lVert\hat{X}_{n'} - \chi \right\rVert\\
&\leq \frac{1}{2M} 2M=1.
\end{align*}
Now,
\begin{align*}
\p\left(\left\lVert S_n - \chi \right\rVert \geq \epsilon\right)&=\p\left(\left\lVert Y_n\right\rVert\geq \frac{n\epsilon}{2M} \right)\\
&\stackrel{(a)}{\leq} 2e^2 \exp \left( \frac{-n\epsilon^2}{8M^2}\right).
\end{align*}
In the above, the step $(a)$ follows from \cite[Theorem 1.8]{hayes2005large}, and the fact that every martingale is a very weak martingale (cf. \cite[Definition 1.3]{hayes2005large}).
\end{proof}
\begin{lemma}
	\label{lm:bias_err}
 \begin{align*}
 \mathbb{E}\left [\left\lVert\widehat{\nabla}_{n,\mu} \hat{J}_{m}(\theta) - \nabla J_{\mu}(\theta) \right \rVert \right ] \leq \frac{2\sqrt{2\pi}e^2dL}{\sqrt{n}}.
 \end{align*}
\end{lemma}
\begin{proof}
	From \eqref{eq:nabla_Jm}, we obtain
  \begin{align}
 \label{eq:hat_nabla_Jm_bound}
 	\left\lVert\widehat{\nabla}_{n,\mu} \hat{J}_{m}(\theta)\right\rVert
    &=\left\lVert\frac{d}{n}\sum_{i=1}^{n}\frac{\hat{J}_{m}(\theta+\mu v_i) - \hat{J}_{m}(\theta - \mu v_i)}{2\mu}v_i\right\rVert\nonumber\\
    &\stackrel{(a)}{\leq}\frac{d}{2\mu n}\sum_{i=1}^{n}\left\lVert\left(\hat{J}_{m}(\theta+\mu v_i) - \hat{J}_{m}(\theta - \mu v_i)\right)v_i\right\rVert\nonumber\\
    &\stackrel{(b)}{=}\frac{d}{2\mu n}\sum_{i=1}^{n}\left\lvert\hat{J}_{m}(\theta+\mu v_i) - \hat{J}_{m}(\theta - \mu v_i)\right\rvert\left\lVert v_i\right\rVert\nonumber\\
    &\stackrel{(c)}{=}\frac{d}{2\mu n}\sum_{i=1}^{n}\left\lvert\hat{J}_{m}(\theta+\mu v_i) - \hat{J}_{m}(\theta - \mu v_i)\right\rvert\nonumber\\
    &\stackrel{(d)}{\leq}\frac{d}{2\mu n}\sum_{i=1}^{n}L\left\lVert 2\mu v_i\right\rVert\nonumber\\
     &\stackrel{(f)}{\leq}dL \textrm{ a.s.},\; \forall 0<n<\infty.
 \end{align}
 In the above, the step $(a)$ follows from the fact that $\lVert\sum_{i=1}^n a_i \rVert \leq \sum_{i=1}^n \lVert a_i \rVert$, and step $(b)$ follows from the fact that for a scalar a and a vector B, $\lVert aB \rVert = \lvert a \rvert \lVert B \rVert$. The steps $(c)$ and $(f)$ follow since $\left\lVert v_i\right\rVert=1$. The step $(d)$ follows from Lemma \ref{lm:Jm_lip}.

From \eqref{eq:hat_nabla_Jm_bound}, Lemma \ref{lm:del_J_mu_2}, and Lemma \ref{lm:hayes}, we obtain
    \begin{align}
        \label{eq:hat_nabla_Jm_prob}
        \p\left(\left\lVert \widehat{\nabla}_{\mu,n} \hat{J}_m({\theta})  - \nabla J_{\mu}(\theta)  \right\rVert > \epsilon\right) \leq 2e^2\exp\left(\frac{-n\epsilon^2}{8d^2 L^2}\right),
    \end{align}
and
    \begin{align}
\E\left[\left\lVert \widehat{\nabla}_{\mu,n} \hat{J}_m({\theta})  - \nabla J_{\mu}(\theta) \right\rVert \right]
&= \int_{0}^{\infty} \p\left( \left\lVert \widehat{\nabla}_{\mu,n} \hat{J}_m({\theta})  - \nabla J_{\mu}(\theta)\right\rVert > \epsilon\right)d\epsilon \nonumber\\
&\stackrel{(a)}{\leq} \int_{0}^{\infty} 2e^2\exp\left(-\frac{n{\epsilon}^2}{8d^2L^2}\right) d\epsilon \nonumber\\
&\stackrel{(b)}{=} \frac{2\sqrt{2\pi}e^2dL}{\sqrt{n}}.\label{eq:hat_nabla_Jm_err}
\end{align}
In the above, the step $(a)$ follows from \eqref{eq:hat_nabla_Jm_prob}, and step $(b)$ follows from the fact that $\int_{0}^{\infty}exp(-a\epsilon^2)d\epsilon=\frac{1}{2}\sqrt{\frac{\pi}{a}},\;\forall a>0$.
\end{proof}
\begin{lemma}
	\label{lm:var}
 \begin{align*}
	\E\left[\left\lVert \widehat{\nabla}_{n,\mu} \hat{J}_{m}(\theta) \right\rVert^2\right] \leq d^2L^2.
 \end{align*}
\end{lemma}
\begin{proof}
 \begin{align}
 \label{eq:hat_nabla_Jm_var}
 	\E\left[\left\lVert\widehat{\nabla}_{n,\mu} \hat{J}_{m}(\theta)\right\rVert^2 \right]
    &=\E\left[\left\lVert\frac{d}{n}\sum_{i=1}^{n}\frac{\hat{J}_{m}(\theta+\mu v_i) - \hat{J}_{m}(\theta - \mu v_i)}{2\mu}v_i\right\rVert^2\right]\nonumber\\
    &\stackrel{(a)}{\leq}\frac{d^2}{4\mu^2 n^2}\E\left[n\sum_{i=1}^{n}\left\lVert\left(\hat{J}_{m}(\theta+\mu v_i) - \hat{J}_{m}(\theta - \mu v_i)\right)v_i\right\rVert^2\right]\nonumber\\
    &\stackrel{(b)}{=}\frac{d^2}{4\mu^2}\E\left[\left\lVert\left(\hat{J}_{m}(\theta+\mu v) - \hat{J}_{m}(\theta - \mu v)\right)v\right\rVert^2\right]\nonumber\\
    &\stackrel{(c)}{=}\frac{d^2}{4\mu^2 }\E\left[\left\lvert\hat{J}_{m}(\theta+\mu v) - \hat{J}_{m}(\theta - \mu v)\right\rvert^2\left\lVert v\right\rVert^2\right]\nonumber\\
    &\stackrel{(d)}{=}\frac{d^2}{4\mu^2 }\E\left[\left\lvert\hat{J}_{m}(\theta+\mu v) - \hat{J}_{m}(\theta - \mu v)\right\rvert^2\right]\nonumber\\
    &\stackrel{(e)}{\leq}\frac{d^2}{4\mu^2 }\E\left[4 L^2\mu^2 \left\lVert v\right\rVert^2\right]\nonumber\\
     &\stackrel{(f)}{\leq}d^2L^2.
 \end{align}
  In the above, the step $(a)$ follows from the fact that $\lVert\sum_{i=1}^n a_i \rVert^2 \leq n\sum_{i=1}^n \lVert a_i \rVert^2 $, and step $(b)$ follows since $v_{1:n}$ are i.i.d r.v.s. The step $(c)$ follows from the fact that for a scalar $a$ and a vector $B$, $\lVert aB\rVert^2=\lvert a\rvert^2 \lVert B\rVert^2 $. The step $(d)$ and $(f)$ follow since $\left\lVert v\right\rVert=1$. The step $(e)$ follows from Lemma \ref{lm:Jm_lip}.
\end{proof}
}
The claim below is well-known in the context of projections on to convex sets. We have provided the proof for the sake of completeness.
\begin{lemma}
	\label{lm:proxy}
	The projection operator $\mathcal{P}_\Theta$ defined in \eqref{eq:proxy} satisfies
	\begin{align*}
		&(\romannumeral1) \left\lVert \mathcal{P}_\Theta(\theta, f(\theta), \alpha) \right\rVert \leq  \left\lVert f(\theta) \right\rVert,\\
		&(\romannumeral2) \left\lVert \mathcal{P}_\Theta(\theta, f(\theta), \alpha) - \mathcal{P}_\Theta(\theta, g(\theta), \alpha) \right\rVert
		 \leq \left\lVert f(\theta) - g(\theta) \right\rVert, \textrm{ and}\\
		&(\romannumeral3) \left\langle f(\theta),\mathcal{P}_\Theta(\theta, f(\theta), \alpha) \right\rangle \geq  \left\lVert \mathcal{P}_\Theta(\theta, f(\theta), \alpha)  \right\rVert^2.
	\end{align*}
\end{lemma}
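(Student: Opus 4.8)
The plan is to reduce all three claims to two standard facts about Euclidean projection onto the closed convex set $\Theta$: \emph{non-expansiveness}, $\lVert \Pi_\Theta(x)-\Pi_\Theta(y)\rVert \leq \lVert x-y\rVert$, and the \emph{variational inequality} (obtuse-angle) characterization, namely that for every $z\in\Theta$ we have $\langle x-\Pi_\Theta(x),\,z-\Pi_\Theta(x)\rangle \leq 0$. Throughout I would use the facts that $\theta\in\Theta$ (the iterates live in $\Theta$ by construction) and hence $\Pi_\Theta(\theta)=\theta$, together with the definition $\mathcal{P}_\Theta(\theta,f(\theta),\alpha)=\frac{1}{\alpha}\left[\Pi_{\Theta}(\theta+\alpha f(\theta))-\theta\right]$ from \eqref{eq:proxy}.

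For claim (i), I would write $\Pi_\Theta(\theta+\alpha f(\theta))-\theta = \Pi_\Theta(\theta+\alpha f(\theta))-\Pi_\Theta(\theta)$ using $\Pi_\Theta(\theta)=\theta$, and then apply non-expansiveness to bound the right-hand side by $\lVert(\theta+\alpha f(\theta))-\theta\rVert=\alpha\lVert f(\theta)\rVert$; dividing by $\alpha$ gives the result. Claim (ii) is essentially identical: the two $\theta$-subtractions cancel, leaving $\frac{1}{\alpha}\lVert \Pi_\Theta(\theta+\alpha f(\theta))-\Pi_\Theta(\theta+\alpha g(\theta))\rVert$, and non-expansiveness bounds this by $\frac{1}{\alpha}\lVert \alpha f(\theta)-\alpha g(\theta)\rVert = \lVert f(\theta)-g(\theta)\rVert$.

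The only part requiring genuine care, and the main obstacle, is claim (iii), since non-expansiveness alone is not enough here; the variational inequality is needed. Writing $P:=\mathcal{P}_\Theta(\theta,f(\theta),\alpha)$ so that $\Pi_\Theta(\theta+\alpha f(\theta))=\theta+\alpha P$, I would substitute $x=\theta+\alpha f(\theta)$ into the variational inequality, which yields $\langle \alpha f(\theta)-\alpha P,\; z-\theta-\alpha P\rangle \leq 0$ for every $z\in\Theta$. The decisive move is to choose $z=\theta$ (legal precisely because $\theta\in\Theta$), which collapses the second argument to $-\alpha P$ and gives $-\alpha^2\langle f(\theta)-P,\,P\rangle \leq 0$, i.e. $\langle f(\theta),P\rangle \geq \lVert P\rVert^2$ after dividing by $\alpha^2>0$. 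Identifying the right test point $z=\theta$ in the obtuse-angle inequality is the crux of the argument; once it is made, the remaining manipulation is purely algebraic.
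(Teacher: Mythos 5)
Your proposal is correct and matches the paper's own argument essentially step for step: parts (\romannumeral1) and (\romannumeral2) via non-expansiveness of $\Pi_\Theta$ (the paper invokes $\lVert \Pi_\Theta(x)-y\rVert \le \lVert x-y\rVert$ for $y\in\Theta$, which is exactly your non-expansiveness plus $\Pi_\Theta(\theta)=\theta$), and part (\romannumeral3) via the obtuse-angle variational inequality $\langle \Pi_\Theta(x)-x,\,\Pi_\Theta(x)-y\rangle \le 0$ at $x=\theta+\alpha f(\theta)$ with the same test point $y=\theta$ that you identify as the crux. No gaps; the only implicit hypothesis in both proofs is $\theta\in\Theta$, which holds for the iterates by construction.
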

\begin{proof}
$(\romannumeral1)$
\begin{align*}
	\left\lVert \mathcal{P}_\Theta(\theta, f(\theta), \alpha) \right\rVert
	&\stackrel{(a)}{=}  \frac{1}{\alpha}\left\lVert \Pi_{\Theta}(\theta+\alpha f(\theta))-\theta\right\rVert \\
	&\stackrel{(b)}{\leq}  \frac{1}{\alpha}\left\lVert \theta+\alpha f(\theta) -\theta\right\rVert \\
    &= \left\lVert f(\theta) \right\rVert,
\end{align*}
where \((a)\) follows from \eqref{eq:proxy}, and \((b)\) follows since $\left\lVert \Pi_{\Theta}(x) - y \right\rVert \leq \left\lVert x - y \right\rVert,\,\forall y \in \Theta$.

$(\romannumeral2)$
	\begin{align*}
	&\left\lVert \mathcal{P}_\Theta(\theta, f(\theta), \alpha) - \mathcal{P}_\Theta(\theta, g(\theta), \alpha)\right\rVert  \\
	&\stackrel{(a)}{=}  \left\lVert \frac{1}{\alpha}\left[\Pi_{\Theta}(\theta+\alpha f(\theta))-\theta\right]
	- \frac{1}{\alpha}\left[\Pi_{\Theta}(\theta+\alpha g(\theta))-\theta\right]\right\rVert \\
	&= \frac{1}{\alpha} \left\lVert \Pi_{\Theta}(\theta+\alpha f(\theta)) - \Pi_{\Theta}(\theta+\alpha g(\theta))\right\rVert \\
	&\stackrel{(b)}{\leq}  \frac{1}{\alpha}\left\lVert \theta+\alpha f(\theta) -\theta-\alpha g(\theta)\right\rVert \\
	&\leq  \left\lVert f(\theta) - g(\theta) \right\rVert,
	\end{align*}
 where \((a)\) follows from \eqref{eq:proxy}, and \((b)\) follows since\\ $\left\lVert \Pi_{\Theta}(x) - \Pi_{\Theta}(y) \right\rVert \leq \left\lVert x - y \right\rVert,\,\forall x,y$.

$(\romannumeral3)$
	\begin{align*}
	&\left\langle f(\theta),\mathcal{P}_\Theta(\theta, f(\theta), \alpha) \right\rangle -  \left\lVert \mathcal{P}_\Theta(\theta, f(\theta), \alpha)  \right\rVert^2 \\
	&=\left\langle f(\theta),\mathcal{P}_\Theta(\theta, f(\theta), \alpha) \right\rangle
	-\left\langle \mathcal{P}_\Theta(\theta, f(\theta), \alpha) ,\mathcal{P}_\Theta(\theta, f(\theta), \alpha) \right\rangle\\
	&=\left\langle f(\theta)-\mathcal{P}_\Theta(\theta, f(\theta),\alpha),\mathcal{P}_\Theta(\theta, f(\theta), \alpha) \right\rangle\\
	&=\left\langle f(\theta)-\frac{1}{\alpha}\left[\Pi_\Theta(\theta+\alpha f(\theta))-\theta\right],
	\frac{1}{\alpha}\left[\Pi_\Theta(\theta+\alpha f(\theta))-\theta\right] \right\rangle\\
	&\stackrel{(a)}{=}-\frac{1}{\alpha^2}\left\langle \Pi_\Theta(\theta+\alpha f(\theta))-(\theta + \alpha f(\theta)),
	\Pi_\Theta(\theta+\alpha f(\theta))-\theta \right\rangle \geq 0,
	\end{align*}
 where \((a)\) follows since $\left\langle \Pi_\Theta(x)-x, \Pi_\Theta(x)-y\right\rangle \leq 0,\,\forall y \in \Theta$.
\end{proof}

\subsection*{Proof of Theorem \ref{tm:non_asym}}
	Using the fundamental theorem of calculus, we obtain
	\begin{align}
		& J(\theta_k) - J(\theta_{k+1}) \nonumber\\
		&=\langle \nabla J(\theta_k), \theta_k - \theta_{k+1} \rangle
		+ \int_0^1 \left \langle \nabla J(\theta_{k+1}+\tau(\theta_k - \theta_{k+1}))-\nabla J(\theta_k),\theta_k - \theta_{k+1} \right \rangle d\tau \nonumber\\
		&\stackrel{(a)}{\leq}\langle \nabla J(\theta_k), \theta_k - \theta_{k+1} \rangle
		+ \int_0^1 \left\lVert \nabla J(\theta_{k+1}+\tau(\theta_k - \theta_{k+1}))-\nabla J(\theta_k)\right\rVert
		\left\lVert \theta_k - \theta_{k+1} \right \rVert d\tau \nonumber\\
		&\stackrel{(b)}{\leq} \left \langle \nabla J(\theta_k), \theta_k - \theta_{k+1} \right \rangle
		  + L\left\lVert \theta_k - \theta_{k+1} \right\rVert^2  \int_0^1 (1-\tau) d\tau \nonumber\\
		&\leq \left \langle \nabla J(\theta_k), \theta_k - \theta_{k+1} \right \rangle + \frac{L}{2}\left\lVert \theta_k - \theta_{k+1} \right\rVert^2\\
		&\stackrel{(c)}{\leq} \alpha_k \left \langle \nabla J(\theta_k), -\mathcal{P}_\Theta(\theta_k,\widehat{\nabla}_{n_k,\mu_k} \hat{J}_{m_k}(\theta_k), \alpha_k) \right \rangle
		  + \frac{L\alpha_k^2}{2}\left\lVert \mathcal{P}_\Theta(\theta_k,\widehat{\nabla}_{n_k,\mu_k} \hat{J}_{m_k}(\theta_k), \alpha_k) \right\rVert^2 \nonumber\\
		&\leq \alpha_k \left \langle \nabla J(\theta_k), \mathcal{P}_\Theta(\theta_k,\nabla J (\theta_k), \alpha_k)
		-\mathcal{P}_\Theta(\theta_k,\widehat{\nabla}_{n_k,\mu_k} \hat{J}_{m_k}(\theta_k), \alpha_k)\right \rangle \nonumber\\
		&\quad -\alpha_k \left \langle \nabla J(\theta_k), \mathcal{P}_\Theta(\theta_k,\nabla J (\theta_k), \alpha_k) \right \rangle
		+ \frac{L\alpha_k^2}{2}\left\lVert \mathcal{P}_\Theta(\theta_k,\widehat{\nabla}_{n_k,\mu_k} \hat{J}_{m_k}(\theta_k), \alpha_k) \right\rVert^2 \nonumber\\
		&\stackrel{(d)}{\leq} \alpha_k \left \lVert \nabla J(\theta_k) \right \rVert\left\lVert \nabla J (\theta_k) -\widehat{\nabla}_{n_k,\mu_k} \hat{J}_{m_k}(\theta_k)\right \rVert \nonumber\\
		&\quad -\alpha_k \left \lVert \mathcal{P}_\Theta(\theta_k,\nabla J(\theta_k), \alpha_k) \right \rVert^2
		  + \frac{L\alpha_k^2}{2}\left\lVert \widehat{\nabla}_{n_k,\mu_k} \hat{J}_{m_k}(\theta_k) \right\rVert^2 \nonumber\\
		&\stackrel{(e)}{\leq} L\alpha_k \left\lVert \nabla J (\theta_k) - \widehat{\nabla}_{n_k,\mu_k} \hat{J}_{m_k}(\theta_k)\right \rVert \nonumber\\
		&\quad -\alpha_k \left \lVert \mathcal{P}_\Theta(\theta_k,\nabla J(\theta_k), \alpha_k) \right \rVert^2
		  + \frac{L\alpha_k^2}{2}\left\lVert \widehat{\nabla}_{n_k,\mu_k} \hat{J}_{m_k}(\theta_k)\right\rVert^2 \nonumber\\
		&\leq L\alpha_k \left\lVert \nabla J (\theta_k) - \nabla J_{\mu_k} (\theta_k) \right \rVert
		+L\alpha_k \left\lVert \nabla J_{\mu_k} (\theta_k)-\widehat{\nabla}_{n_k,\mu_k} \hat{J}_{m_k}(\theta_k)\right \rVert \nonumber\\
		&\quad -\alpha_k \left \lVert \mathcal{P}_\Theta(\theta_k,\nabla J(\theta_k), \alpha_k) \right \rVert^2
		  + \frac{L\alpha_k^2}{2}\left\lVert \widehat{\nabla}_{n_k,\mu_k} \hat{J}_{m_k}(\theta_k)\right\rVert^2 \nonumber\\
		&\stackrel{(f)}{\leq} \frac{dL^2}{2}\alpha_k \mu_k
		+L\alpha_k \left\lVert \nabla J_{\mu_k}(\theta_k)-\widehat{\nabla}_{n_k,\mu_k} \hat{J}_{m_k}(\theta_k)\right \rVert\nonumber\\
		& \quad -\alpha_k \left \lVert \mathcal{P}_\Theta(\theta_k,\nabla J(\theta_k), \alpha_k) \right \rVert^2
		+ \frac{L\alpha_k^2}{2}\left\lVert \widehat{\nabla}_{n_k,\mu_k} \hat{J}_{m_k}(\theta_k) \right\rVert^2, \label{eq:1}
	\end{align}
where \((a)\) follows from the Cauchy–Schwarz inequality, and \((b)\) follows from Lemma \ref{lm:J_lip}. The step \((c)\) follows from \eqref{eq:gd_nonasym}, and \((d)\) follows from Lemma \ref{lm:proxy}. The step \((e)\) follows from Lemma \ref{lm:J_lip}, and \((f)\) follows from Lemma \ref{lm:bias}.

	Summing \eqref{eq:1} for $k=0,\ldots,N-1$, we obtain
	\begin{align}
		&\sum_{k=0}^{N-1}\alpha_k \left \lVert \mathcal{P}_\Theta(\theta_k,\nabla J(\theta_k), \alpha_k) \right \rVert^2 \nonumber\\
		&\leq \left(J(\theta_N) - J(\theta_{0}) \right)+ \frac{dL^2}{2}\sum_{k=0}^{N-1} \alpha_k \mu_k\nonumber\\
		&\quad+L\sum_{k=0}^{N-1} \alpha_k \left \lVert \nabla J_{\mu_k}(\theta_k)-\widehat{\nabla}_{n_k,\mu_k} \hat{J}_{m_k}(\theta_k) \right\rVert
		  +\frac{L}{2}\sum_{k=0}^{N-1}\alpha_k^2\left\lVert \widehat{\nabla}_{n_k,\mu_k} \hat{J}_{m_k}(\theta_k)\right\rVert^2\nonumber\\
		&\leq \left(J^* - J(\theta_{0}) \right)+ \frac{dL^2}{2}\sum_{k=0}^{N-1} \alpha_k \mu_k\nonumber\\
		&\quad+L\sum_{k=0}^{N-1} \alpha_k \left \lVert \nabla J_{\mu_k}(\theta_k)-\widehat{\nabla}_{n_k,\mu_k} \hat{J}_{m_k}(\theta_k) \right\rVert
		+\frac{L}{2}\sum_{k=0}^{N-1}\alpha_k^2\left\lVert \widehat{\nabla}_{n_k,\mu_k} \hat{J}_{m_k}(\theta_k)\right\rVert^2.\label{eq:2}
	\end{align}
	Taking expectations on both sides of \eqref{eq:2}, we obtain
	\begin{align*}
		&\sum_{k=0}^{N-1}\alpha_k \mathbb{E}\left[\left \lVert \mathcal{P}_\Theta(\theta_k,\nabla J(\theta_k), \alpha_k) \right \rVert^2\right ] \nonumber\\
		&\leq \left(J^* - J(\theta_{0}) \right)+ \frac{dL^2}{2}\sum_{k=0}^{N-1} \alpha_k \mu_k\nonumber\\
		&\quad+L\sum_{k=0}^{N-1} \alpha_k\mathbb{E}\left[ \left \lVert \nabla J_{\mu_k}(\theta_k)-\widehat{\nabla}_{n_k,\mu_k} \hat{J}_{m_k}(\theta_k) \right\rVert\right]
		+\frac{L}{2}\sum_{k=0}^{N-1}\alpha_k^2\mathbb{E}\left[\left\lVert \widehat{\nabla}_{n_k,\mu_k} \hat{J}_{m_k}(\theta_k)\right\rVert^2\right]\nonumber\\
        &\stackrel{(a)}{\leq} \left(J^* - J(\theta_{0}) \right)+\frac{dL^2}{2}\sum_{k=0}^{N-1} \alpha_k \mu_k + 2\sqrt{2\pi}e^2dL^2 \sum_{k=0}^{N-1} \frac{\alpha_k}{\sqrt{n_k}} +\frac{d^2L^3}{2}\sum_{k=0}^{N-1}\alpha_k^2,
	\end{align*}
where \((a)\) follows from Lemmas \ref{lm:bias_err} and \ref{lm:var}.

	Since $\mathbb{P}(R=k)=\frac{\alpha_k }{\sum_{k=0}^{N-1}\alpha_k}$, we obtain
	\begin{align*}
		&\mathbb{E}\left[\left\lVert \mathcal{P}_\Theta(\theta_R, \nabla J (\theta_R), \alpha_R) \right\rVert^2\right]\\
		&= \frac{\sum\limits_{k=0}^{N-1}\alpha_k\mathbb{E}\left[\left\lVert \mathcal{P}_\Theta(\theta_k,\nabla J(\theta_k), \alpha_k) \right\rVert^2\right]}{\sum\limits_{k=0}^{N-1}\alpha_k}\\
 &\stackrel{(a)}{\leq} \frac{\left(J^* - J(\theta_{0}) \right)+\frac{dL^2}{2}\sum_{k=0}^{N-1} \alpha_k \mu_k + 2\sqrt{2\pi}e^2dL^2 \sum_{k=0}^{N-1} \frac{\alpha_k}{\sqrt{n_k}} +\frac{d^2L^3}{2}\sum_{k=0}^{N-1}\alpha_k^2}{\sum\limits_{k=0}^{N-1}\alpha_k}.
	\end{align*}
 \hfill{\qed}
\subsection*{Proof of Corollary \ref{cr:non_asym}}
{
	In \eqref{eq:nonasym-gen-bd}, we substitute $\alpha_k=\frac{1}{\sqrt{N}}$, $\mu_k=\frac{1}{\sqrt{N}}$, and $n_k=N$, $\forall k$, to obtain
	\begin{align*}
		&\E\left[\left\lVert \mathcal{P}_\Theta(\theta_R, \nabla J (\theta_R), \alpha_R) \right\rVert^2\right]
        \leq\frac{J^* - J(\theta_{0})}{\sqrt{N}}+\frac{dL^2+d^2L^3}{2\sqrt{N}} + \frac{2\sqrt{2\pi}e^2dL^2 }{\sqrt{N}}.
	\end{align*}
 \hfill{\qed}
}
\subsection{Analysis of OffP-SF-SVRG}
\label{sec:proofs-svrg}
\begin{lemma}
	\label{lm:E_Jhat}
 \begin{align*}
	\mathbb{E}_b\left[\hat{J}^j(\theta)\right] = J(\theta),\;\forall j.
 \end{align*}
\end{lemma}
\begin{proof}
	Notice that
    \begin{align*}
        \mathbb{E}_b\left[\hat{J}^j(\theta)\right]
        &=\mathop{\mathbb{E}}_{\substack{[1, m] \sim b \\ j \in[1,m]}}\left[\sum_{t=0}^{T^j-1}\gamma^t R_{t+1}^j \left(\prod_{i=0}^{t}\frac{\pi_{\theta}(A_i^j |S_i^j)}{b(A_i^j|S_i^j)} \right) \right] \\
        &=\mathop{\mathbb{E}}_{\substack{[1,m] \sim {\pi_\theta} \\ j \in[1,m]}}\left[\sum_{t=0}^{T^j-1}\gamma^t R_{t+1}^j\right]
        =J(\theta).
    \end{align*}
\end{proof}
\begin{lemma}
	\label{lm:Jm_EJhat}
 \begin{align*}
	\widehat{\nabla}_{n,\mu} \hat{J}_{m}(\theta) =\mathop{\mathbb{E}}_{j\in[1,m]}\left[\widehat{\nabla}_{n,\mu} \hat{J}^j(\theta) \right].
 \end{align*}
\end{lemma}
\begin{proof}
	Notice that
	\begin{align*}
		\widehat{\nabla}_{n,\mu} \hat{J}_{m}(\theta)
		&=\frac{d}{n}\sum_{i=1}^{n}\frac{\hat{J}_{m}(\theta+\mu v_i) - \hat{J}_{m}(\theta - \mu v_i)}{2\mu}v_i\\
		&\stackrel{(a)}{=}\frac{1}{m}\sum_{j=1}^{m}\frac{d}{n}\sum_{i=1}^{n}\frac{\hat{J}^j(\theta+\mu v_i) - \hat{J}^j(\theta - \mu v_i)}{2\mu}v_i\\
		&=\mathop{\mathbb{E}}_{j\in[1,m]}\left[\frac{d}{n}\sum_{i=1}^{n} \frac{\hat{J}^j(\theta+\mu v_i) - \hat{J}^j(\theta - \mu v_i)}{2\mu}v_i\right]\\
		&\stackrel{(b)}{=}\mathop{\mathbb{E}}_{j\in[1,m]}\left[\widehat{\nabla}_{n,\mu} \hat{J}^j(\theta) \right],
	\end{align*}
 where \((a)\) follows from \eqref{eq:Jm} and \eqref{eq:Jhat_j}. The step \((b)\) follows from \eqref{eq:nabla_Jhat_j}.
\end{proof}
{
\begin{lemma}
	\label{lm:del_Jhatj_bias}
 \begin{align*}
	\mathbb{E}\left[\widehat{\nabla}_{n,\mu}\hat{J}^j(\theta)\right]=\nabla J_{\mu}(\theta),\;\forall 0<n<\infty.
 \end{align*}
\end{lemma}
\begin{proof}
	We follow the technique from \cite{shamir}.
	\begin{align*}
		\mathbb{E}\left[\widehat{\nabla}_{n,\mu}\hat{J}^{j}(\theta)\right]
        &= \mathbb{E}_{b}\left[\mathbb{E}_{v_{1:n}}\left[\widehat{\nabla}_{n,\mu}\hat{J}^{j}(\theta)\right] \right] \\
		&=\mathbb{E}_{b}\left[\frac{d}{n}\mathbb{E}_{v_{1:n}}\left[\sum_{i=1}^{n}\frac{\hat{J}^{j}(\theta+\mu v_i)
			- \hat{J}^{j}(\theta-\mu v_i)}{2\mu}v_i\right] \right]\\
		&\stackrel{(a)}{=}\frac{d}{2\mu} \mathbb{E}_{b}\left[\mathbb{E}_{v}\left[\left(\hat{J}^{j}(\theta+\mu v) - \hat{J}^{j}(\theta-\mu v)\right)v\right] \right] \\
		&=\frac{d}{2\mu} \mathbb{E}_{v}\left[\mathbb{E}_{b}\left[\left(\hat{J}^{j}(\theta+\mu v) - \hat{J}^{j}(\theta-\mu v)\right)v\right] \right] \\
		&\stackrel{(b)}{=}\frac{d}{2\mu} \mathbb{E}_{v}\left[\left(J(\theta+\mu v) - J(\theta-\mu v)\right)v \right]\\
		&=\frac{d}{2\mu} \mathbb{E}_{v}\left[J(\theta+\mu v) v\right] + \mathbb{E}_{v}\left[J(\theta+\mu (-v)) (-v) \right] \\
		&\stackrel{(c)}{=}\frac{d}{\mu} \mathbb{E}_{v}\left[J(\theta+\mu v) v\right] \\
		&\stackrel{(d)}{=}\nabla J_{\mu}(\theta),\forall 0<n<\infty.
	\end{align*}
 In the above, the step \((a)\) follows since $v_{1:n}$ are i.i.d r.v.s, the step \((b)\) follows from Lemma \ref{lm:E_Jhat}, the step \((c)\) follows from the fact that $v$ has a symmetric distribution, and the step \((d)\) follows from Lemma \ref{lm:del_J_mu_1}.
\end{proof}
\begin{lemma}
	\label{lm:Eg_Jmu}
 \begin{align*}
	\mathbb{E}\left[g_k^s\right]=\nabla J_{\mu}(\theta_k^s).
 \end{align*}
\end{lemma}
\begin{proof}
	Notice that
	\begin{align*}
		\mathbb{E}\left[g_k^s\right]
		&=\mathbb{E}\left[\widehat{\nabla}_{n,\mu} \hat{J}^j(\theta_k^s) - \widehat{\nabla}_{n,\mu} \hat{J}^j(\tilde{\theta}^s) + \widehat{\nabla}_{n,\mu}\hat{J}_{m}(\tilde{\theta}^s)\right]\\
		&=\mathbb{E}\left[\widehat{\nabla}_{n,\mu} \hat{J}^j(\theta_k^s)\right]
		+\mathbb{E}\left[ \mathop{\mathbb{E}}_{j\in[1,m]}\left[\widehat{\nabla}_{n,\mu}\hat{J}_{m}(\tilde{\theta}^s)-\widehat{\nabla}_{n,\mu} \hat{J}^j(\tilde{\theta}^s)\right]\right]\\
		&\stackrel{(a)}{=}\mathbb{E}\left[\widehat{\nabla}_{n,\mu} \hat{J}^j(\theta_k^s)\right]\\
  		&\stackrel{(b)}{=} \nabla J_{\mu}(\theta_k^s).
	\end{align*}
 where the step $(a)$ follows from Lemma \ref{lm:Jm_EJhat} and the step $(b)$ follows from Lemma \ref{lm:del_Jhatj_bias}.
\end{proof}
\begin{lemma}
	\label{lm:Jmu_J}
 \begin{align*}
	\left\lvert J_{\mu}(\theta) - J (\theta)\right\rvert \leq \frac{ L\mu^2}{2}.
 \end{align*}
\end{lemma}
\begin{proof}
	From \eqref{eq:J_mu}, we obtain
\begin{align*}
\left\lvert J_{\mu}(\theta) - J (\theta)\right\rvert
&=\left\lvert \E_{u \in \mathbb{B}^d}\left[J(\theta+\mu u)\right] - J (\theta)\right\rvert \\
&\stackrel{(a)}{=}\left\lvert \E_{u \in \mathbb{B}^d}\left[J(\theta+\mu u) - J (\theta)-\left\langle\nabla J (\theta) ,\mu u\right\rangle\right]\right\rvert\\
&\leq \E_{u \in \mathbb{B}^d}\left[\left\lvert J(\theta+\mu u) - J (\theta)-\left\langle \nabla J (\theta) ,\mu u\right\rangle\right\rvert\right]\\
&\stackrel{(b)}{\leq} \E_{u \in \mathbb{B}^d}\left[\frac{L}{2}\left\lVert \mu u\right\rVert^2\right]\\
&\stackrel{(c)}{\leq}\frac{L\mu^2}{2}.
\end{align*}
In the above, the step $(a)$ follows since $\E_{u\in\mathbb{B}^{d}}\left[u\right]=0$. The step $(b)$ follows from \cite[Lemma 1.2.3]{nesterov_book}. The step $(c)$ follows since $\lVert u \rVert \leq 1, \forall u\in \mathbb{B}^d$.
\end{proof}

\begin{lemma}
\label{lm:nabla_Jmu_lip}
 \begin{align*}
 \left\lVert \nabla J_\mu(\theta_1) - \nabla J_\mu(\theta_2) \right\rVert \leq L\left\lVert \theta_1 - \theta_2 \right\rVert.
 \end{align*}
\end{lemma}
\begin{proof}
Since $\left\lVert \nabla J(\theta) \right\rVert \leq L$ from Lemma \ref{lm:J_lip}, we interchange the expectation and derivative by utilizing dominated convergence theorem to derive an expression for $\nabla J_\mu(\theta)$ as given below:
    \begin{align}
    \label{eq:nabla_J_mu}
    	\nabla J_\mu(\theta) = \nabla \mathbb{E}_{u \in \mathbb{B}^d}\left[J(\theta+\mu u)\right]= \mathbb{E}_{u \in \mathbb{B}^d}\left[\nabla J(\theta+\mu u)\right].
    \end{align}
From \eqref{eq:nabla_J_mu}, we obtain
    \begin{align}
    \label{eq:nabla_J_mu_lip}
    \left\lVert \nabla J_\mu(\theta_1) - \nabla J_\mu(\theta_2) \right\rVert
    &=  \left\lVert \mathbb{E}_{u \in \mathbb{B}^d}\left[\nabla J(\theta_1+\mu u)-\nabla J(\theta_2+\mu u)\right]\right\rVert \nonumber\\
    & \stackrel{(a)}{\leq}   \mathbb{E}_{u \in \mathbb{B}^d}\left[\left\lVert\nabla J(\theta_1+\mu u)-\nabla J(\theta_2+\mu u)\right\rVert\right]\nonumber\\
    & \stackrel{(b)}{\leq}   \mathbb{E}_{u \in \mathbb{B}^d}\left[L\left\lVert\theta_1-\theta_2 \right\rVert\right]\nonumber\\
    & =  L\left\lVert\theta_1-\theta_2 \right\rVert.
    \end{align}
    In the above, the step $(a)$ follows from the fact that $\lVert E[X] \rVert \leq  E[\lVert X \rVert]$, and the step $(b)$ follows from Lemma \ref{lm:J_lip}.
\end{proof}
\begin{lemma}
	\label{lm:bias_err_sq_jm}
 \begin{align*}
 \mathbb{E}\left [\left\lVert\widehat{\nabla}_{n,\mu} \hat{J}_{m}(\theta) - \nabla J_{\mu}(\theta) \right \rVert^2 \right ] \leq \frac{16e^2d^2L^2}{n}.
 \end{align*}
\end{lemma}
\begin{proof}
From \eqref{eq:hat_nabla_Jm_bound} in Lemma \ref{lm:bias_err} we have $\forall 0<n<\infty$, $\left\lVert\widehat{\nabla}_{n,\mu} \hat{J}_{m}(\theta)\right\rVert \leq dL$ a.s., and from Lemma \ref{lm:del_J_mu_2}, we have $\E\left[\widehat{\nabla}_{\mu,n} \hat{J}_m({\theta})\right]=\nabla J_{\mu}(\theta)$.

Hence, from Lemma \ref{lm:hayes}, we obtain
    \begin{align}
        \label{eq:hat_nabla_Jm_prob2}
        \p\left(\left\lVert \widehat{\nabla}_{\mu,n} \hat{J}_m({\theta})  - \nabla J_{\mu}(\theta)  \right\rVert > \epsilon\right) \leq 2e^2\exp\left(\frac{-n\epsilon^2}{8d^2 L^2}\right),
    \end{align}
and
    \begin{align*}
\E\left[\left\lVert \widehat{\nabla}_{\mu,n} \hat{J}_m({\theta})  - \nabla J_{\mu}(\theta) \right\rVert^2 \right]
&= \int_{0}^{\infty} \p\left( \left\lVert \widehat{\nabla}_{\mu,n} \hat{J}_m({\theta})  - \nabla J_{\mu}(\theta)\right\rVert > \sqrt{\epsilon}\right)d\epsilon \nonumber\\
&\stackrel{(a)}{\leq} \int_{0}^{\infty} 2e^2\exp\left(-\frac{n{\epsilon}}{8d^2L^2}\right) d\epsilon \nonumber\\
&\stackrel{(b)}{=} \frac{16e^2d^2L^2}{n}.
\end{align*}
In the above, step $(a)$ follows from \eqref{eq:hat_nabla_Jm_prob2}, and step $(b)$ follows from the fact that $\int_{0}^{\infty}\exp(-a\epsilon)d\epsilon=\frac{1}{a},\;\forall a>0$.
\end{proof}
\begin{lemma}
	\label{lm:bias_err_sq_jj}
 \begin{align*}
 \mathbb{E}\left [\left\lVert\widehat{\nabla}_{n,\mu} \hat{J}^{j}(\theta) - \nabla J_{\mu}(\theta) \right \rVert^2 \right ] \leq \frac{16e^2d^2L^2}{n}.
 \end{align*}
\end{lemma}
\begin{proof}
The result follows from Lemmas \ref{lm:bias_err_sq_jm} and \ref{lm:Jm_lip} with $m=1$.
\end{proof}
\begin{lemma}
	\label{lm:E_gk_Jmu}
	\begin{align*}
		&\E\left[\left\lVert g_k^s - \nabla J_{\mu}(\theta_k^s)\right\rVert^2\right]
		\leq \frac{224e^2d^2L^2}{n}+6L^2\E\left[\left\lVert \theta_k^s -\tilde{\theta}^s \right\rVert^2 \right ].
  	\end{align*}
\end{lemma}
\begin{proof}
	Notice that
	\begin{align*}
		&\E\left[\left\lVert g_k^s- \nabla J_\mu(\theta_k^s)\right\rVert^2\right] \\
		&=\E\left[\left\lVert \widehat{\nabla}_{n,\mu} \hat{J}^j(\theta_k^s) - \widehat{\nabla}_{n,\mu} \hat{J}^j(\tilde{\theta}^s)
		+ \widehat{\nabla}_{n,\mu}\hat{J}_{m}(\tilde{\theta}^s) - \nabla J_{\mu}(\theta_k^s) \right\rVert^2\right]\\
  		&=\E\left[\left\lVert \widehat{\nabla}_{n,\mu} \hat{J}^j(\theta_k^s)-\widehat{\nabla}_{n,\mu} \hat{J}^j(\tilde{\theta}^s)
		+ \widehat{\nabla}_{n,\mu}\hat{J}_{m}(\tilde{\theta}^s) -\widehat{\nabla}_{n,\mu}\hat{J}_{m}(\theta_k^s)\right.\right.\nonumber\\
        &\qquad \left.\left.+\widehat{\nabla}_{n,\mu}\hat{J}_{m}(\theta_k^s) - \nabla J_{\mu}(\theta_k^s) \right\rVert^2\right]\\
        &\stackrel{(a)}{=}\E\left[\left\lVert \widehat{\nabla}_{n,\mu} \hat{J}^j(\theta_k^s)-\widehat{\nabla}_{n,\mu} \hat{J}^j(\tilde{\theta}^s)
		-\mathop{\mathbb{E}}_{j\in[1,m]}\left[\widehat{\nabla}_{n,\mu}\hat{J}^{j}(\theta_k^s)-\widehat{\nabla}_{n,\mu}\hat{J}^{j}(\tilde{\theta}^s)\right]\right.\right.\nonumber\\
        &\qquad \left.\left.+\widehat{\nabla}_{n,\mu}\hat{J}_{m}(\theta_k^s) - \nabla J_{\mu}(\theta_k^s) \right\rVert^2\right]\\
        &\stackrel{(b)}{\leq}2\E\left[\left\lVert \widehat{\nabla}_{n,\mu} \hat{J}^j(\theta_k^s)-\widehat{\nabla}_{n,\mu} \hat{J}^j(\tilde{\theta}^s)
		-\mathop{\mathbb{E}}_{j\in[1,m]}\left[  \widehat{\nabla}_{n,\mu}\hat{J}^{j}(\theta_k^s)-\widehat{\nabla}_{n,\mu}\hat{J}^{j}(\tilde{\theta}^s)\right]\right\rVert^2\right]\nonumber\\
        &\quad+2\E\left[\left\lVert\widehat{\nabla}_{n,\mu}\hat{J}_{m}(\theta_k^s) - \nabla J_{\mu}(\theta_k^s) \right\rVert^2\right]\\
        &\stackrel{(c)}{\leq}2\E\left[\left\lVert \widehat{\nabla}_{n,\mu} \hat{J}^j(\theta_k^s)-\widehat{\nabla}_{n,\mu} \hat{J}^j(\tilde{\theta}^s)
		\right\rVert^2\right]
        +2\E\left[\left\lVert\widehat{\nabla}_{n,\mu}\hat{J}_{m}(\theta_k^s) - \nabla J_{\mu}(\theta_k^s) \right\rVert^2\right]\\
        &=2\E\left[\left\lVert \widehat{\nabla}_{n,\mu} \hat{J}^j(\theta_k^s)-\widehat{\nabla}_{n,\mu} \hat{J}^j(\tilde{\theta}^s) +\nabla J_{\mu}(\theta_k^s)- \nabla J_{\mu}(\theta_k^s)+\nabla J_{\mu}(\tilde{\theta}^s)-\nabla J_{\mu}(\tilde{\theta}^s)
		\right\rVert^2\right]\nonumber\\
        &\quad+2\E\left[\left\lVert\widehat{\nabla}_{n,\mu}\hat{J}_{m}(\theta_k^s) - \nabla J_{\mu}(\theta_k^s) \right\rVert^2\right]\\
    	&\stackrel{(d)}{\leq}6\E\left[\left\lVert \widehat{\nabla}_{n,\mu} \hat{J}^j(\theta_k^s) - \nabla J_{\mu}(\theta_k^s)\right\rVert^2 \right ]+ 6\E\left[\left\lVert \nabla J_{\mu}(\tilde{\theta}^s)- \widehat{\nabla}_{n,\mu} \hat{J}^j(\tilde{\theta}^s)\right\rVert^2 \right ]\nonumber\\
		&\qquad +6 \E\left[\left\lVert \nabla J_{\mu}(\theta_k^s) -\nabla J_{\mu}(\tilde{\theta}^s) \right\rVert^2 \right ]+ 2\E\left[\left\lVert\widehat{\nabla}_{n,\mu}\hat{J}_{m}(\theta_k^s) - \nabla J_{\mu}(\theta_k^s) \right\rVert^2\right] \\
      	&\stackrel{(e)}{\leq} \frac{224e^2d^2L^2}{n}+6L^2\E\left[\left\lVert \theta_k^s -\tilde{\theta}^s \right\rVert^2 \right ].
	\end{align*}
 In the above, the step $(a)$ follows from Lemma \ref{lm:Jm_EJhat}. The step $(b)$ and $(d)$ follow since $\lVert\sum_{i=1}^n a_i \rVert^2\leq n \sum_{i=1}^n\lVert\ a_i \rVert^2$.
 The step $(c)$ follows since $\E[\lVert X-E[X]\rVert^2]\leq \E[\lVert X \rVert^2]$, and the step $(e)$ follows from Lemmas
 \ref{lm:bias_err_sq_jm}, \ref{lm:bias_err_sq_jj} and \ref{lm:nabla_Jmu_lip}.
\end{proof}
\begin{lemma}
	\label{lm:E_gk_J}
	\begin{align*}
		&\E\left[\left\lVert g_k^s - \nabla J(\theta_k^s)\right\rVert^2\right]
		\leq \frac{192e^2d^2L^2}{n}+\mu^2 d^2L^2.
  	\end{align*}
\end{lemma}
\begin{proof}
	Notice that
	\begin{align*}
		&\E\left[\left\lVert g_k^s- \nabla J(\theta_k^s)\right\rVert^2\right] \\
		&=\E\left[\left\lVert \widehat{\nabla}_{n,\mu} \hat{J}^j(\theta_k^s) - \widehat{\nabla}_{n,\mu} \hat{J}^j(\tilde{\theta}^s)
		+ \widehat{\nabla}_{n,\mu}\hat{J}_{m}(\tilde{\theta}^s) - \nabla J(\theta_k^s) \right\rVert^2\right]\\
  		&=\E\left[\left\lVert \widehat{\nabla}_{n,\mu} \hat{J}^j(\theta_k^s) - \nabla J_{\mu}(\theta_k^s)+ \nabla J_{\mu}(\tilde{\theta}^s)- \widehat{\nabla}_{n,\mu} \hat{J}^j(\tilde{\theta}^s)\right.\right.\nonumber\\
		&\qquad \left.\left.+ \widehat{\nabla}_{n,\mu}\hat{J}_{m}(\tilde{\theta}^s) -\nabla J_{\mu}(\tilde{\theta}^s) + \nabla J_{\mu}(\theta_k^s) - \nabla J(\theta_k^s) \right\rVert^2\right]\\
    	&\stackrel{(a)}{\leq}4\E\left[\left\lVert \widehat{\nabla}_{n,\mu} \hat{J}^j(\theta_k^s) - \nabla J_{\mu}(\theta_k^s)\right\rVert^2 \right ]+ 4\E\left[\left\lVert \nabla J_{\mu}(\tilde{\theta}^s)- \widehat{\nabla}_{n,\mu} \hat{J}^j(\tilde{\theta}^s)\right\rVert^2 \right ]\nonumber\\
		&\qquad +4 \E\left[\left\lVert \widehat{\nabla}_{n,\mu}\hat{J}_{m}(\tilde{\theta}^s) -\nabla J_{\mu}(\tilde{\theta}^s) \right\rVert^2 \right ]+ 4\E\left[\left\lVert \nabla J_{\mu}(\theta_k^s) - \nabla J(\theta_k^s) \right\rVert^2\right]\\
      	&\stackrel{(b)}{\leq} \frac{192e^2d^2L^2}{n}+\mu^2 d^2L^2.
	\end{align*}
 	In the above, the step \((a)\) follows since $\lVert\sum_{i=1}^n a_i \rVert^2\leq n \sum_{i=1}^n\lVert\ a_i \rVert^2$. The step $(b)$ follows from Lemmas \ref{lm:bias_err_sq_jm}, \ref{lm:bias_err_sq_jj} and \ref{lm:bias}.
\end{proof}
\subsubsection*{Proof of Theorem \ref{tm:non_asym_svrg}}
To prove the theorem, we utilize techniques from \cite{reddi16,liu}.

Using the fundamental theorem of calculus, we obtain
	\begin{align}
		& J_{\mu}(\theta_k^s) - J_{\mu}(\theta_{k+1}^s) \nonumber\\
		&=\langle \nabla J_{\mu}(\theta_k^s), \theta_k^s - \theta_{k+1}^s \rangle + \int_0^1 \left \langle \nabla J_{\mu}(\theta_{k+1}^s+\tau(\theta_k^s - \theta_{k+1}^s))-\nabla J_{\mu}(\theta_k^s), \theta_k^s - \theta_{k+1}^s \right \rangle d\tau \nonumber\\
		&\stackrel{(a)}{\leq}\langle \nabla J_{\mu}(\theta_k^s), \theta_k^s - \theta_{k+1}^s \rangle + \int_0^1 \left\lVert \nabla J_{\mu}(\theta_{k+1}^s+\tau(\theta_k^s - \theta_{k+1}^s))-\nabla J_{\mu}(\theta_k^s)\right\rVert \left\lVert \theta_k^s - \theta_{k+1}^s \right \rVert d\tau \nonumber\\
		&\stackrel{(b)}{\leq} \left \langle \nabla J_{\mu}(\theta_k^s), \theta_k^s - \theta_{k+1}^s \right \rangle + L\left\lVert \theta_k^s - \theta_{k+1}^s \right\rVert^2  \int_0^1 (1-\tau) d\tau \nonumber\\
		&= \left \langle \nabla J_{\mu}(\theta_k^s), \theta_k^s - \theta_{k+1}^s \right \rangle + \frac{L}{2}\left\lVert \theta_k^s - \theta_{k+1}^s \right\rVert^2 \nonumber\\
		&= \alpha \left \langle \nabla J_{\mu}(\theta_k^s), -\mathcal{P}_\Theta(\theta_k^s,g_k^s, \alpha) \right \rangle + \frac{L\alpha^2}{2}\left\lVert \mathcal{P}_\Theta(\theta_k^s, g_k^s, \alpha) \right\rVert^2 \nonumber\\
  		&= \alpha \left \langle g_k^s -\nabla J_{\mu}(\theta_k^s), \mathcal{P}_\Theta(\theta_k^s,g_k^s, \alpha) \right \rangle - \alpha \left \langle g_k^s, \mathcal{P}_\Theta(\theta_k^s,g_k^s, \alpha) \right \rangle
        +\frac{L\alpha^2}{2}\left\lVert \mathcal{P}_\Theta(\theta_k^s, g_k^s, \alpha) \right\rVert^2 \nonumber\\
        &\stackrel{(c)}\leq \alpha \left \langle g_k^s -\nabla J_{\mu}(\theta_k^s), \mathcal{P}_\Theta(\theta_k^s,g_k^s, \alpha) \right \rangle
        +\left(\frac{L\alpha^2}{2}-\alpha\right)\left\lVert \mathcal{P}_\Theta(\theta_k^s, g_k^s, \alpha) \right\rVert^2 \nonumber\\
        &= \alpha \left \langle g_k^s -\nabla J_{\mu}(\theta_k^s), \mathcal{P}_\Theta(\theta_k^s,g_k^s, \alpha)- \mathcal{P}_\Theta(\theta_k^s,\nabla J_{\mu}(\theta_k^s), \alpha)\right \rangle \nonumber\\
        &\quad+\alpha \left \langle g_k^s -\nabla J_{\mu}(\theta_k^s),  \mathcal{P}_\Theta(\theta_k^s,\nabla J_{\mu}(\theta_k^s), \alpha)\right \rangle
        +\left(\frac{L\alpha^2}{2}-\alpha\right)\left\lVert \mathcal{P}_\Theta(\theta_k^s, g_k^s, \alpha) \right\rVert^2 \nonumber\\
        &\stackrel{(d)}{\leq} \alpha \left \lVert g_k^s -\nabla J_{\mu}(\theta_k^s)\right\rVert \left \lVert\mathcal{P}_\Theta(\theta_k^s,g_k^s, \alpha)- \mathcal{P}_\Theta(\theta_k^s,\nabla J_{\mu}(\theta_k^s), \alpha)\right \rVert \nonumber\\
        &\quad+\alpha \left \langle g_k^s -\nabla J_{\mu}(\theta_k^s),  \mathcal{P}_\Theta(\theta_k^s,\nabla J_{\mu}(\theta_k^s), \alpha)\right \rangle
        +\left(\frac{L\alpha^2}{2}-\alpha\right)\left\lVert \mathcal{P}_\Theta(\theta_k^s, g_k^s, \alpha) \right\rVert^2 \nonumber\\
        &\stackrel{(e)}{\leq} \alpha \left \lVert g_k^s -\nabla J_{\mu}(\theta_k^s)\right\rVert^2 +\left(\frac{L\alpha^2}{2}-\alpha\right)\left\lVert \mathcal{P}_\Theta(\theta_k^s, g_k^s, \alpha) \right\rVert^2  \nonumber\\
        &\quad+\alpha \left \langle g_k^s -\nabla J_{\mu}(\theta_k^s),  \mathcal{P}_\Theta(\theta_k^s,\nabla J_{\mu}(\theta_k^s), \alpha)\right \rangle
      \label{eq:1_svrg}
	\end{align}
In the above steps $(a)$ and $(d)$ follow from the fact that $\langle a,b \rangle \leq \lVert a \rVert\lVert b \rVert$. The step $(b)$ follows from Lemma \ref{lm:nabla_Jmu_lip}. The steps $(c)$ and $(e)$ follow from Lemma \ref{lm:proxy}.

	Taking expectations on both sides of \eqref{eq:1_svrg}, we obtain
	\begin{align}
		& \E\left[J_{\mu}(\theta_{k+1}^s)\right] \nonumber\\
		&\geq \E\left[J_{\mu}(\theta_k^s)\right] + \left(\alpha-\frac{L\alpha^2}{2}\right)\E\left[\left\lVert \mathcal{P}_\Theta(\theta_k^s, g_k^s, \alpha) \right\rVert^2 \right] - \alpha \E\left[\left \lVert g_k^s -\nabla J_{\mu}(\theta_k^s)\right\rVert^2 \right ]\nonumber\\
        &\quad-\alpha\E\left[ \left \langle g_k^s -\nabla J_{\mu}(\theta_k^s),  \mathcal{P}_\Theta(\theta_k^s,\nabla J_{\mu}(\theta_k^s), \alpha)\right \rangle\right]\nonumber\\
        &\stackrel{(a)}{=} \E\left[J_{\mu}(\theta_k^s)\right] + \left(\alpha-\frac{L\alpha^2}{2}\right)\E\left[\left\lVert \mathcal{P}_\Theta(\theta_k^s, g_k^s, \alpha) \right\rVert^2 \right] - \alpha \E\left[\left \lVert g_k^s -\nabla J_{\mu}(\theta_k^s)\right\rVert^2 \right ]\nonumber\\
        &\stackrel{(b)}{\geq} \E\left[J_{\mu}(\theta_k^s)\right] + \left(\alpha-\frac{L\alpha^2}{2}\right)\E\left[\left\lVert \mathcal{P}_\Theta(\theta_k^s, g_k^s, \alpha) \right\rVert^2 \right] -6L^2\alpha\E\left[\left\lVert \theta_k^s -\tilde{\theta}^s \right\rVert^2 \right ] \nonumber\\
        &\qquad -\frac{224e^2d^2L^2\alpha}{n}.
        		\label{eq:2_svrg}
	\end{align}
	In the above, the step $(a)$ follows since $\E\left[ \left \langle g_k^s -\nabla J_{\mu}(\theta_k^s),  \mathcal{P}_\Theta(\theta_k^s,\nabla J_{\mu}(\theta_k^s), \alpha)\right \rangle\right]=0$ by utilizing the fact that $\mathbb{E}\left[g_k^s\right]=\nabla J_{\mu}(\theta_k^s)$ from Lemma \ref{lm:Eg_Jmu}. The step $(b)$ follows from Lemma \ref{lm:E_gk_Jmu}.

 Now,
	\begin{align}
		&\E\left[\left\lVert \theta_{k+1}^s - \tilde{\theta}^s\right\rVert^2\right] \nonumber\\
        &= \E\left[\left\lVert \theta_{k+1}^s - \theta_{k}^s +\theta_{k}^s  -\tilde{\theta}^s\right\rVert^2\right]\nonumber\\
         &= \E\left[\left\lVert \theta_{k+1}^s - \theta_{k}^s\right\rVert^2 + 2\left\langle\theta_{k+1}^s - \theta_{k}^s, \theta_{k}^s  -\tilde{\theta}^s \right\rangle+\left\lVert\theta_{k}^s  -\tilde{\theta}^s\right\rVert^2\right]\nonumber\\
         &= \E\left[\alpha^2\left\lVert \mathcal{P}_\Theta(\theta_k^s, g_k^s, \alpha)\right\rVert^2 + 2\alpha\left\langle\mathcal{P}_\Theta(\theta_k^s, g_k^s, \alpha), \theta_{k}^s  -\tilde{\theta}^s \right\rangle+\left\lVert\theta_{k}^s  -\tilde{\theta}^s\right\rVert^2\right]\nonumber\\
        &\stackrel{(a)}{\leq} \E\left[\alpha^2\left\lVert \mathcal{P}_\Theta(\theta_k^s, g_k^s, \alpha)\right\rVert^2 + \frac{\alpha}{\beta}\left\lVert\mathcal{P}_\Theta(\theta_k^s, g_k^s, \alpha)\right\rVert^2+ \alpha \beta\left\lVert\theta_{k}^s  -\tilde{\theta}^s \right\rVert^2+\left\lVert\theta_{k}^s  -\tilde{\theta}^s\right\rVert^2\right]\nonumber\\
        &= \left(\alpha^2+\frac{\alpha}{\beta}\right) \E\left[\left\lVert \mathcal{P}_\Theta(\theta_k^s, g_k^s, \alpha)\right\rVert^2 \right]+ \left(1+\alpha \beta \right)\E\left[\left\lVert\theta_{k}^s  -\tilde{\theta}^s \right\rVert^2\right],
           \label{eq:3_svrg}
	\end{align}
	where the step $(a)$ follows from the fact that $\langle a,b\rangle \leq \frac{\lVert a\rVert^2}{2\beta}+\frac{\lVert b\rVert^2\beta}{2},\;\beta > 0$.

	Let
	\begin{align}
		R_k^s = \E\left[J_{\mu}(\theta_k^s)\right] - b_k \E \left[\left\lVert \theta_k^s - \tilde{\theta}^s\right\rVert^2\right].
		\label{eq:4_svrg}
	\end{align}
 Now,
	\begin{align}
		&R_{k+1}^s= \E\left[J_{\mu}(\theta_{k+1}^s)\right] - b_{k+1} \E \left[\left\lVert \theta_{k+1}^s - \tilde{\theta}^s\right\rVert^2\right]\nonumber\\
		&\stackrel{(a)}{\geq} \E\left[J_{\mu}(\theta_k^s)\right] + \left(\alpha-\frac{L\alpha^2}{2}\right)\E\left[\left\lVert \mathcal{P}_\Theta(\theta_k^s, g_k^s, \alpha) \right\rVert^2 \right] -6L^2\alpha\E\left[\left\lVert \theta_k^s -\tilde{\theta}^s \right\rVert^2 \right ] \nonumber\\
        &\qquad -\frac{224e^2d^2L^2\alpha}{n}- b_{k+1} \mathbb{E}\left[\left\lVert \theta_{k+1}^s - \tilde{\theta}^s\right\rVert^2\right]\nonumber\\
		&\stackrel{(b)}{\geq} \E\left[J_{\mu}(\theta_k^s)\right] + \left(\alpha-\frac{L\alpha^2}{2} - b_{k+1}\left(\alpha^2+\frac{\alpha}{\beta}\right)\right)\E\left[\left\lVert \mathcal{P}_\Theta(\theta_k^s, g_k^s, \alpha) \right\rVert^2 \right] \nonumber\\
        &\quad - \left(6L^2\alpha+b_{k+1} \left(1+\alpha \beta \right)\right)\E\left[\left\lVert \theta_k^s -\tilde{\theta}^s \right\rVert^2 \right ] -\frac{224e^2d^2L^2\alpha}{n}.
		\label{eq:5_svrg}
	\end{align}
In the above, the step \((a)\) follows from \eqref{eq:2_svrg}, and the step \((b)\) follows from \eqref{eq:3_svrg}.

	Let
	\begin{align}
		b_k &= \label{eq:6_svrg}
		\begin{cases}
			x +  b_{k+1}\left(1+y\right)&\textrm{ for }k\in\{0,l-1\}\\
			0&\textrm{ for }k\ge l,
		\end{cases}
		\shortintertext{where}
		x&=6L^2\alpha,\;
		y= \alpha \beta;\; \textrm{ (recall that $\beta>0$ is introduced in \eqref{eq:3_svrg})}\nonumber
	\end{align}

	By solving the recursion \eqref{eq:6_svrg}, we obtain
	\begin{align}
		\label{eq:rec1}
		b_k &= \frac{x}{y} \left(\left(1+y\right)^{l-k} -1\right).
	\end{align}
	We can see that
	\begin{align}
		\label{eq:rec2}
		&b_k \leq b_0 ,\;\forall k, \textrm{ and}\nonumber\\
        &b_0 \leq \frac{6L^2}{\beta} \left(\left(1+\alpha \beta\right)^l-1\right),
	\end{align}
Let
\begin{align}
\label{eq:ck}
c_k&= \alpha-\frac{L\alpha^2}{2} - b_{k+1}\left(\alpha^2+\frac{\alpha}{\beta}\right), \textrm{ and}\nonumber\\
\bar{c}&=\argmin_{k\in\{0,\cdots,l-1\}} \alpha-\frac{L\alpha^2}{2} - b_{k+1}\left(\alpha^2+\frac{\alpha}{\beta}\right).
\end{align}
From \eqref{eq:5_svrg}, \eqref{eq:6_svrg} and  \eqref{eq:ck}, we obtain
 \begin{align}
 &R_{k+1}^s \geq R_{k}^s + \bar{c} \E\left[\left\lVert \mathcal{P}_\Theta(\theta_k^s, g_k^s, \alpha) \right\rVert^2 \right]
        -\frac{224e^2d^2L^2\alpha}{n}.
        \label{eq:7_svrg}
\end{align}

	Summing \eqref{eq:7_svrg} from $k=0,\cdots,l-1$, we obtain
	\begin{align}
 &R_{l}^s \geq R_{0}^s + \sum_{k=0}^{l-1} \bar{c} \E\left[\left\lVert \mathcal{P}_\Theta(\theta_k^s, g_k^s, \alpha) \right\rVert^2 \right]
        -\frac{224e^2d^2L^2\alpha l}{n}.
		  \label{eq:8_svrg}
	\end{align}
 Now, from \eqref{eq:4_svrg} we obtain
	\begin{align}
		R_l^s &= \E\left[J_{\mu}(\theta_l^s)\right] - b_l \E\left[\left\lVert \theta_l^s - \tilde{\theta}^s\right\rVert^2\right]
		\stackrel{(a)}{=}\E\left[J_{\mu}(\theta_l^s)\right]=\E\left[J_{\mu}(\tilde{\theta}^{s+1})\right],\nonumber\\
		R_0^s &= \E\left[J_{\mu}(\theta_0^s)\right] - b_0 \E\left[\left\lVert \theta_0^s - \tilde{\theta}^s\right\rVert^2\right]
		\stackrel{(b)}{=}\E\left[J_{\mu}(\theta_0^s)\right]=\E\left[J_{\mu}(\tilde{\theta}^{s})\right],
		\label{eq:R}
	\end{align}
where \((a)\) follows since $b_l=0$ from \eqref{eq:6_svrg}, and \((b)\) follows since $\theta_0^s = \tilde{\theta}^s$.

From \eqref{eq:8_svrg} and \eqref{eq:R}, we obtain
	\begin{align}
 &\E\left[J_{\mu}(\tilde{\theta}^{s+1})\right] \geq \E\left[J_{\mu}(\tilde{\theta}^{s}) \right]+ \sum_{k=0}^{l-1} \bar{c} \E\left[\left\lVert \mathcal{P}_\Theta(\theta_k^s, g_k^s, \alpha) \right\rVert^2 \right]
        -\frac{224e^2d^2L^2\alpha l}{n}.
		  \label{eq:8_1_svrg}
	\end{align}

	Summing \eqref{eq:8_1_svrg} from $s=0,\cdots,S-1$, we obtain
	\begin{align}
	   &\E\left[J_{\mu}(\tilde{\theta}^{S})\right] \geq \E\left[J_{\mu}(\tilde{\theta}^{0})\right]
  + \sum_{s=0}^{S-1}\sum_{k=0}^{l-1} \bar{c} \E\left[\left\lVert \mathcal{P}_\Theta(\theta_k^s, g_k^s, \alpha) \right\rVert^2 \right]
        -\frac{224e^2d^2L^2\alpha Sl}{n}.
		\label{eq:9_svrg}
	\end{align}
Re-arranging \eqref{eq:9_svrg}, we obtain
	\begin{align}
	   \sum_{s=0}^{S-1}\sum_{k=0}^{l-1} \bar{c} \E\left[\left\lVert \mathcal{P}_\Theta(\theta_k^s, g_k^s, \alpha) \right\rVert^2 \right]
    &\leq
    \E\left[J_{\mu}(\tilde{\theta}^{S})\right] - \E\left[J_{\mu}(\tilde{\theta}^{0})\right] +\frac{224e^2d^2L^2\alpha Sl}{n}\nonumber\\
    &\leq
    J_{\mu}^*-J_{\mu}(\theta^{0}_{0}) +\frac{224e^2d^2L^2\alpha Sl}{n}\nonumber\\
    &\stackrel{(a)}{\leq}
    J^*-J(\theta^{0}_{0})+L\mu^2 +\frac{224e^2d^2L^2\alpha Sl}{n}.
		\label{eq:10_svrg}
	\end{align}
In the above, $J_{\mu}^*=\max_{\theta\in\Theta}J_{\mu}(\theta)$ and $J^*=\max_{\theta\in\Theta}J(\theta)$. The step $(a)$ follows from Lemma \ref{lm:Jmu_J} and from the fact that $\lvert J_{\mu}^* - J^* \rvert=\lvert \max_{\theta\in\Theta}J_{\mu}(\theta)- \max_{\theta\in\Theta}J(\theta)\rvert \leq \max_{\theta\in\Theta}\lvert J_{\mu}(\theta)- J(\theta)\rvert$.

	From the theorem statement, we have $\alpha=\frac{1}{6 Ld}$ and $l=d$. Let $\beta=6 L$.
   	From \eqref{eq:rec2}, we have
	\begin{align}
		b_0&\leq \frac{6L^2}{\beta} \left(\left(1+\alpha\beta\right)^l-1\right)\nonumber\\
           &\leq L \left(\left(1+\frac{1}{d}\right)^d-1\right)\nonumber\\
           &\stackrel{(a)}{\leq} L \left(e-1\right)\leq 2L.
           \label{eq:b0}
	\end{align}
In the above, the step $(a)$ follows from the fact that $\lim_{a\to\infty}(1+1/a)^a=e,\; a>0$.

 Using \eqref{eq:b0} and \eqref{eq:rec2} on \eqref{eq:ck}, we obtain
\begin{align}
\label{eq:c_tilde}
\bar{c}&=\argmin_{k\in\{0,\cdots,l-1\}} \alpha-\frac{L\alpha^2}{2} - b_{k+1}\left(\alpha^2+\frac{\alpha}{\beta}\right)\nonumber\\
&\geq \alpha-\frac{L\alpha^2}{2} - 2L\left(\alpha^2+\frac{\alpha}{6 L}\right)\nonumber\\
&=\frac{1}{6 Ld}-\frac{1}{72 Ld^2} -\frac{1}{18 Ld^2}-\frac{1}{18 Ld }\nonumber\\
&=\tilde{c}.
\end{align}
We can see that $\tilde{c}>0$.

Now, applying \eqref{eq:c_tilde} in \eqref{eq:10_svrg}, we obtain
	\begin{align}
	   \sum_{s=0}^{S-1}\sum_{k=0}^{l-1} \tilde{c} \E\left[\left\lVert \mathcal{P}_\Theta(\theta_k^s, g_k^s, \alpha) \right\rVert^2 \right]
    &\leq J^*-J(\theta^{0}_{0})+L\mu^2 +\frac{112e^2dL Sl}{3 n}.
		\label{eq:11_svrg}
	\end{align}

	Since $\mathbb{P}(Q=s, R=k)=\frac{1 }{Sl}$, we obtain
	\begin{align}
		\E\left[\left \lVert \mathcal{P}_\Theta(\theta_R^Q, g_R^Q), \alpha) \right \rVert^2\right]
		&= \frac{\sum\limits_{s=0}^{S-1}\sum\limits_{k=0}^{l-1}\tilde{c}\mathbb{E}\left[\left\lVert \mathcal{P}_\Theta(\theta_k^s,g_k^s), \alpha) \right\rVert^2\right]}{\sum_{s=0}^{S-1}\sum_{k=0}^{l-1}\tilde{c}}\nonumber\\
		&\leq \frac{J^*-J(\theta^{0}_{0})+L\mu^2 +\frac{112e^2dL Sl}{3 n}}{Sl\tilde{c}}.
		\label{eq:12_svrg}
	\end{align}
Now,
\begin{align}
&\E\left[\left \lVert \mathcal{P}_\Theta(\theta_R^Q,\nabla J(\theta_R^Q), \alpha) \right \rVert^2\right]\nonumber\\
&=\E\left[\left \lVert \mathcal{P}_\Theta(\theta_R^Q,\nabla J(\theta_R^Q), \alpha)
- \mathcal{P}_\Theta(\theta_R^Q, g_R^Q, \alpha)+\mathcal{P}_\Theta(\theta_R^Q,g_R^Q, \alpha)\right \rVert^2\right]\nonumber\\
&\stackrel{(a)}{\leq}2\E\left[\left \lVert \mathcal{P}_\Theta(\theta_R^Q,\nabla J(\theta_R^Q), \alpha)- \mathcal{P}_\Theta(\theta_R^Q, g_R^Q, \alpha)\right \rVert^2\right] +2\E\left[\left \lVert\mathcal{P}_\Theta(\theta_R^Q,g_R^Q, \alpha)\right \rVert^2\right]\nonumber\\
&\stackrel{(b)}{\leq}2\E\left[\left \lVert \nabla J(\theta_R^Q)  -  g_R^Q\right \rVert^2\right]+2\E\left[\left \lVert\mathcal{P}_\Theta(\theta_R^Q,g_R^Q, \alpha)\right \rVert^2\right]\nonumber\\
&\stackrel{(c)}{\leq}\frac{384e^2d^2L^2}{n}+2\mu^2 d^2L^2 +2\E\left[\left \lVert\mathcal{P}_\Theta(\theta_R^Q,g_R^Q, \alpha)\right \rVert^2\right]\nonumber\\
&\stackrel{(d)}{\leq}\frac{384e^2d^2L^2}{n}+2\mu^2 d^2L^2 +2 \frac{J^*-J(\theta^{0}_{0})+L\mu^2 +\frac{112e^2dL Sl}{3 n}}{Sl\tilde{c}}\nonumber\\
&\stackrel{(e)}{=}\frac{384e^2d^2L^2}{Sl}+\frac{2 d^2L^2}{Sl} +2 \frac{J^*-J(\theta^{0}_{0})+\frac{L}{Sl} +\frac{112e^2dL}{3}}{Sl\tilde{c}}\nonumber\\
&=\frac{2\left(J^*-J(\theta^{0}_{0})\right)}{Sl\tilde{c}}+\frac{384e^2d^2L^2}{Sl}+\frac{2 d^2L^2}{Sl} +\frac{2L}{S^2l^2\tilde{c}} +\frac{224e^2dL}{3 Sl\tilde{c}}.
\end{align}
In the above step $(a)$ follows from the fact that $\lVert\sum_{i=1}^n a_i \rVert^2 \leq n\sum_{i=1}^n \lVert a_i \rVert^2 $, and step $(b)$ follows from Lemma \ref{lm:proxy}. The step $(c)$ follows from Lemma \ref{lm:E_gk_J} since $\mathbb{P}(Q=s, R=k)=\nicefrac{1 }{Sl}$. The step $(d)$ follows from \eqref{eq:12_svrg}. The step $(e)$ follows since $n=Sl$ and $\mu=\frac{1}{\sqrt{Sl}}$
\hfill{\qed}
}

\subsection{Analysis of OffP-REINFORCE}
\label{sec:proofs-pg}
\subsection*{Proof of Theorem \ref{tm:pg}}
The policy gradient estimate $\widehat{\nabla} J_m(\theta)$ in \eqref{eq:hat_nabla_j_theta} is an unbiased estimate of $\nabla J(\theta)$, where
\begin{align}
    \label{eq:nabla_j_theta}
    \nabla J(\theta)
    &= \mathbb{E}_{\pi_\theta}\left[\left(\sum_{t=0}^{T-1}\nabla\log\pi_\theta(A_t\mid S_t)\right)\left(\sum_{t=0}^{T-1}\gamma^t R_{t+1}\right)\right]\nonumber\\
    &= \mathbb{E}_b\left[\left(\prod_{t=0}^{T-1}\frac{\pi_\theta(A_t\mid S_t)}{b(A_t\mid S_t)}\right)
    \left(\sum_{t=0}^{T-1}\nabla\log\pi_\theta(A_t\mid S_t)\right)\left(\sum_{t=0}^{T-1}\gamma^t R_{t+1}\right)\right]\nonumber\\
    &= \mathbb{E}_b\left[\sum_{t=0}^{T-1}\nabla\log\pi_\theta(A_t\mid S_t)
     \left(\prod_{i=0}^{t}\frac{\pi_\theta(A_i\mid S_i)}{b(A_i\mid S_i)}\right)\left(\sum_{i=t}^{T-1}\gamma^i R_{i+1}\right)\right],
\end{align}
and
    \begin{align}
    \label{eq:unbias_hat_nabla_j_theta}
    &\mathbb{E}_b\left[\widehat{\nabla} J_m(\theta)\right] \nonumber\\
    &= \mathbb{E}_b\left[\frac{1}{m}\sum_{j=1}^{m}\left[\sum_{t=0}^{T^j-1}\nabla\log\pi_\theta(A_t^j\mid S_t^j)
    \left(\prod_{i=0}^{t}\frac{\pi_\theta(A_i^j\mid S_i^j)}{b(A_i^j\mid S_i^j)}\right)\left(\sum_{i=t}^{T^j-1}\gamma^i R_{i+1}^j\right)\right]\right]\nonumber\\
    &= \mathbb{E}_b\left[\sum_{t=0}^{T-1}\nabla\log\pi_\theta(A_t\mid S_t)\left(\prod_{i=0}^{t}\frac{\pi_\theta(A_i\mid S_i)}{b(A_i\mid S_i)}\right)
    \left(\sum_{i=t}^{T-1}\gamma^i R_{i+1}\right)\right] \nonumber\\
    &= \nabla J(\theta).
\end{align}

{
Now, from \eqref{eq:Jm} and \eqref{eq:hat_nabla_j_theta}, we obtain
 \begin{align}
     \label{eq:nabla_Jm_J}
     \nabla \hat{J}_{m}(\theta)
     &=\nabla\left[\frac{1}{m}\sum_{j=1}^{m} \sum_{t=0}^{T^j-1}\gamma^t R_{t+1}^j \left(\prod_{i=0}^{t}\frac{\pi_{\theta}(A_i^j \mid S_i^j)}{b(A_i^j \mid S_i^j)} \right) \right] \nonumber\\
      &=\frac{1}{m}\sum_{j=1}^{m} \sum_{t=0}^{T^j-1}\gamma^t R_{t+1}^j \nabla\left(\prod_{i=0}^{t}\frac{\pi_{\theta}(A_i^j \mid S_i^j)}{b(A_i^j \mid S_i^j)} \right)  \nonumber\\
     &\stackrel{(a)}{=}\frac{1}{m}\sum_{j=1}^{m}\sum_{t=0}^{T^j-1}\gamma^t R_{t+1}^j \left(\prod_{i=0}^{t}\frac{\pi_{\theta}(A_i^j \mid S_i^j)}{b(A_i^j \mid S_i^j)} \right)
     \left(\sum_{i=0}^{t}\nabla\log\pi_\theta(A_i^j\mid S_i^j)\right)\nonumber\\
     &=\frac{1}{m}\sum_{j=1}^{m}\left[\sum_{t=0}^{T^j-1}\nabla\log\pi_\theta(A_t^j\mid S_t^j)
     \left(\prod_{i=0}^{t}\frac{\pi_\theta(A_i^j\mid S_i^j)}{b(A_i^j\mid S_i^j)}\right)\left(\sum_{i=t}^{T^j-1}\gamma^i R_{i+1}^j\right)\right]\nonumber\\
     &=\widehat{\nabla} J_m(\theta),
 \end{align}
 where \((a)\) follows from the fact that $\nabla f(x)=f(x)\nabla\log(f(x))$.
 }

We can see that $\nabla J(\theta)$ is $L$-Lipschitz w.r.t $\theta$ and $\left\lVert \widehat{\nabla} J_m(\theta) \right\rVert \leq L$ a.s. using \eqref{eq:unbias_hat_nabla_j_theta}, \eqref{eq:nabla_Jm_J} and Lemmas \ref{lm:Jm_lip} and \ref{lm:J_lip}.

{
   We have $\forall 0<m<\infty$, $ \mathbb{E}_b\left[\widehat{\nabla} J_m(\theta)\right]
     = \nabla J(\theta)$, and $\left\lVert \widehat{\nabla} J_m(\theta) \right\rVert \leq L$ a.s.
     From Lemma \ref{lm:hayes}, we obtain
\begin{align}
    \label{eq:nabla_G_prob}
    &\mathbb{P}\left(\left\lVert \widehat{\nabla} J_m(\theta) - \nabla J(\theta) \right\rVert > \epsilon\right) \leq 2e^2\exp\left(\frac{-m\epsilon^2}{8L^2}\right), 
\end{align}
and
\begin{align}
    \mathbb{E}_b\left[\left\lVert \widehat{\nabla} J_m(\theta)- \nabla J(\theta) \right\rVert \right]
    &= \int_{0}^{\infty} \mathbb{P}\left( \left\lVert \widehat{\nabla} J_m(\theta)- \nabla J(\theta) \right\rVert > \epsilon\right)d\epsilon \nonumber\\
    &\stackrel{(a)}{\leq} \int_{0}^{\infty} 2e^2\exp\left(-\frac{m{\epsilon}^2}{8L^2}\right) d\epsilon\nonumber\\
    &\stackrel{(b)}{=} \frac{2\sqrt{2\pi}e^2L}{\sqrt{m}}.\label{eq:nabla_J_err}
\end{align}
In the above, the step $(a)$ follows from \eqref{eq:nabla_G_prob}, and the step $(b)$ follows from the fact that\\ $\int_{0}^{\infty}exp(-a\epsilon^2)d\epsilon=\frac{1}{2}\sqrt{\frac{\pi}{a}},\;\forall a>0$.
}

By using a completely parallel argument to the initial passage in the proof of Theorem \ref{tm:non_asym} leading up to \eqref{eq:1}, we obtain
\begin{align}
    & J(\theta_k) - J(\theta_{k+1}) \nonumber\\
    &\leq L\alpha \left\lVert \nabla J (\theta_k) - \widehat{\nabla} J_m(\theta_k)\right \rVert + \frac{L\alpha^2}{2}\left\lVert \widehat{\nabla} J_m(\theta_k)\right\rVert^2
    -\alpha \left \lVert \mathcal{P}_\Theta(\theta_k,\nabla J(\theta_k), \alpha) \right \rVert^2. \label{eq:p1}
\end{align}
Summing \eqref{eq:p1} from $k=0,\cdots,N-1$, we obtain
\begin{align}
    &\sum_{k=0}^{N-1}\alpha \left \lVert \mathcal{P}_\Theta(\theta_k,\nabla J(\theta_k), \alpha) \right \rVert^2 \nonumber\\
    &\leq \left(J^* - J(\theta_{0}) \right)+ L\alpha \sum_{k=0}^{N-1}\left\lVert \nabla J (\theta_k) - \widehat{\nabla} J_m(\theta_k)\right \rVert
    +\frac{L\alpha^2}{2}\sum_{k=0}^{N-1} \left\lVert \widehat{\nabla} J_m(\theta_k)\right\rVert^2.\label{eq:p2}
\end{align}
Taking expectations on both sides of \eqref{eq:p2}, we obtain
\begin{align}
    &\sum_{k=0}^{N-1}\alpha \mathbb{E}\left[\left \lVert \mathcal{P}_\Theta(\theta_k,\nabla J(\theta_k), \alpha) \right \rVert^2 \right]\nonumber\\
    &\leq \left(J^* - J(\theta_{0}) \right)+ \frac{L\alpha^2}{2}\sum_{k=0}^{N-1}\mathbb{E}\left[\left\lVert \widehat{\nabla} J_m(\theta_k)\right\rVert^2\right]
    +L\alpha\sum_{k=0}^{N-1} \mathbb{E}\left[\left\lVert \nabla J(\theta_k) - \widehat{\nabla} J_m(\theta_k)\right \rVert\right] \nonumber\\
    &{\leq \left(J^* - J(\theta_{0}) \right)+\frac{L^3}{2}\alpha^2 N+ 2\sqrt{2\pi}e^2L^2\frac{\alpha N}{\sqrt{m}},\label{eq:p3}}
\end{align}
where the last inequality follows from \eqref{eq:nabla_J_err}, and from the fact that $\left\lVert \widehat{\nabla} J_m(\theta) \right\rVert \leq L$ a.s.

Since $\mathbb{P}(R=k)=\frac{1}{N}$, we obtain
\begin{align*}
    \mathbb{E}\left[\left\lVert \mathcal{P}_\Theta(\theta_R, \nabla J (\theta_R), \alpha) \right\rVert^2\right]
    &= \frac{\sum\limits_{k=0}^{N-1}\alpha\mathbb{E}\left[\left\lVert \mathcal{P}_\Theta(\theta_k,\nabla J(\theta_k), \alpha) \right\rVert^2\right]}{\sum_{k=0}^{N-1}\alpha}\\
    &\leq \frac{\left(J^* - J(\theta_{0}) \right)+\frac{L^3}{2} \alpha^2 N + 2\sqrt{2\pi}e^2L^2\frac{\alpha N}{\sqrt{m}}}{\sum_{k=0}^{N-1}\alpha}.
\end{align*}
Since $\alpha=\frac{1}{\sqrt{N}}$ {and $m=N$}, we obtain
\begin{align*}
    &\mathbb{E}\left[\left\lVert \mathcal{P}_\Theta(\theta_R, \nabla J (\theta_R), \alpha) \right\rVert^2\right]
    {\leq \frac{\left(J^* - J(\theta_{0}) \right)}{\sqrt{N}}+\frac{\frac{L^3}{2}+2\sqrt{2\pi}e^2L^2}{\sqrt{N}}.}
\end{align*}
\hfill{\qed}
\section{Simulation analysis}
\label{sec:exps}
We conducted experiments on an control problem called CartPole from OpenAI Gym toolkit \cite{openai}. The problem is to balance a pole which is attached to a moving cart. The state space is continuous and each state is a quadruple (cart position, cart velocity, pole angle, pole velocity at tip) and the action space is discrete (push cart to the left and push cart to the right). We fixed the initial state. The problem is reset to the initial state either after $200$ steps, the pole tilt more than $15$ degrees from vertical, or the cart moves more than $2.4$ units from the centre. We receive a reward of $+1$ for each timestep in which the pole is upright.

We have used the samples collected using an $\epsilon$-greedy behavior policy and a target policy which follows an exponential softmax distribution. We have compared the performance of OffP-SF, OffP-SF-SVRG and OffP-REINFORCE algorithms.
In Figure \ref{fg:simulation}, we plot the performance of the aforementioned algorithms.
\begin{figure}
    \begin{subfigure}[b]{0.49\textwidth}
            \caption{Training}
                        \begin{center}
                            \centerline{\includegraphics[width=0.9\columnwidth]{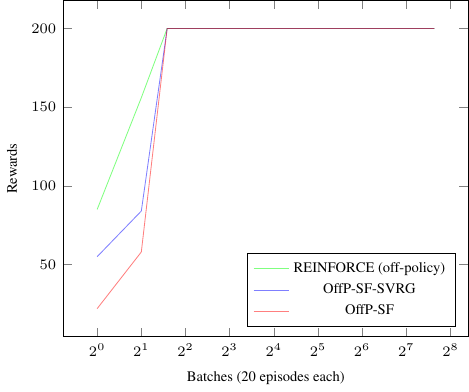}}
                        \end{center}
            \end{subfigure}
        \hfil
        \begin{subfigure}[b]{0.49\textwidth}
                \caption{Testing}
                           \begin{center}
                                \centerline{\includegraphics[width=0.9\columnwidth]{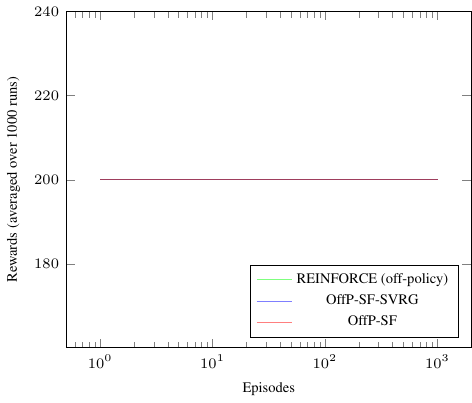}}
                            \end{center}
                \end{subfigure}
    \caption{CartPole with fixed initial state}
    \label{fg:simulation}
\end{figure}

\section{Conclusions and future work}
\label{sec:conclusions}
We proposed two policy gradient algorithms for off-policy control in a RL context. Both algorithms incorporated a smoothed functional scheme for gradient estimation.
For both algorithms, we provided non-asymptotic bounds that establish convergence to an approximate stationary point.

As future work, it would be interesting to study the global convergence properties of our algorithms under additional assumptions such as those used in \cite{zhangJ2020,zhangK2020}. An orthogonal research direction is to incorporate feature-based representations and function approximation together with smoothed functional gradient estimation, and study the non-asymptotic performance of the resulting actor-critic algorithms. Another direction of future work is to check if our algorithms are globally convergent under additional assumptions such as those in \citep{liu2020}.
\bibliography{offP_rl}
\end{document}